\def\figref#1{figure~\ref{#1}}
\def\Figref#1{Figure~\ref{#1}}
\def\secref#1{section~\ref{#1}}
\def\eqref#1{equation~\ref{#1}}
\def\1{\bm{1}}
\DeclareMathAlphabet{\mathsfit}{\encodingdefault}{\sfdefault}{m}{sl}
\SetMathAlphabet{\mathsfit}{bold}{\encodingdefault}{\sfdefault}{bx}{n}
\newcommand{\propref}[1]{Prop.~\ref{#1}}
\newcommand{\appxref}[1]{Appx.~\ref{#1}}
\newcommand{\reals}{\mathbb{R}}
\newtheorem{theorem}{Theorem}[section]
\newtheorem{corollary}[theorem]{Corollary}
\newtheorem{lemma}[theorem]{Lemma}
\newtheorem{prop}{Proposition}
\title{FS-KAN: Permutation Equivariant Kolmogorov-Arnold Networks via Function Sharing}
\author{Ran Elbaz\textsuperscript{1}, Guy Bar-Shalom\textsuperscript{1}, Yam Eitan\textsuperscript{1}, Fabrizio Frasca\textsuperscript{1}, Haggai Maron\textsuperscript{1,2} \\
\textsuperscript{1} Technion – Israel Institute of Technology  \\
\textsuperscript{2} NVIDIA Research}
\begin{document}

\maketitle

\begin{abstract}
 Permutation equivariant neural networks employing parameter-sharing schemes have emerged as powerful models for leveraging a wide range of data symmetries, significantly enhancing the generalization and computational efficiency of the resulting models. Recently, Kolmogorov-Arnold Networks (KANs) have demonstrated promise through their improved interpretability and expressivity compared to traditional architectures based on MLPs. While equivariant KANs have been explored in recent literature for a few specific data types, a principled framework for applying them to data with permutation symmetries in a general context remains absent. This paper introduces Function Sharing KAN (FS-KAN), a principled approach to constructing equivariant and invariant KA layers for arbitrary permutation symmetry groups, unifying and significantly extending previous work in this domain. We derive the basic construction of these FS-KAN layers by generalizing parameter-sharing schemes to the Kolmogorov-Arnold setup and provide a theoretical analysis demonstrating that FS-KANs have the same expressive power as networks that use standard parameter-sharing layers, allowing us to transfer well-known and important expressivity results from parameter-sharing networks to FS-KANs. %while maintaining the advantages of KANs
Empirical evaluations on multiple data types and symmetry groups show that FS-KANs exhibit superior data efficiency compared to standard parameter-sharing layers, by a wide margin in certain cases, while preserving the interpretability and adaptability of KANs, making them an excellent architecture choice in low-data regimes.
\end{abstract}

\section{Introduction} \label{section:intro}
Designing neural network architectures that respect symmetries has become a central theme in modern machine learning, with numerous works showing that leveraging symmetries can lead to improved generalization and computational efficiency \citep{cohen2016group,ravanbakhsh2017equivariance,zaheer2017deep,esteves2018learning,kondor2018generalization,bronstein2021geometric,maron2019invariant}. Equivariant models are designed so that certain transformations applied to the input correspond to predictable transformation of the output. %, aligning with the structure of a symmetry group of the input data type.
Permutation symmetries, subgroups of the symmetric group $S_n$ that act on a vector by permuting coordinates, % i.e., symmetries that can be realized by permuting coordinates in the input data,
are an important and natural family of symmetries that appear in real-world data. Although several methods exist, the most common, general, and scalable approach for designing equivariant architectures for permutation symmetries is to encode equivariance at each layer through a group-specific parameter-sharing scheme \citep{wood1996representation}. These ideas have been applied across a wide range of domains like images \citep{lecun1998gradient}, graphs \citep{maron2019invariant,pan2022permutation}, sets \citep{zaheer2017deep}, and many more.

Kolmogorov–Arnold Networks (KANs) have recently been proposed as an alternative to traditional Multilayer Perceptrons (MLPs), replacing scalar weights with learnable univariate functions \citep{liu2024kankolmogorovarnoldnetworks}. This design is inspired by the Kolmogorov–Arnold representation theorem \citep{tikhomirov1991representation} and leads to models with potentially improved expressivity relative to MLPs, particularly when the number of parameters is constrained. % KANs have also demonstrated advantages in continual learning scenarios \citet{cacciatore2024preliminary,park2024cf}.
Some works have recently integrated KANs into equivariant models for graphs \citep{bresson2024kagnnskolmogorovarnoldnetworksmeet,kiamari2024gkangraphkolmogorovarnoldnetworks}, sets \citep{kashefi2024pointnetkanversuspointnet}, and images \citep{bodner2024convolutionalkolmogorovarnoldnetworks}. In particular, variants like Graph-KAN \citep{kiamari2024gkangraphkolmogorovarnoldnetworks} have achieved competitive or superior performance on graph-structured data. % over Graph Neural Networks (GNNs) featuring standard MLP components within their layers on graph-structured data \citet{kiamari2024gkangraphkolmogorovarnoldnetworks}.
Very recently, \citet{hu2025incorporatingarbitrarymatrixgroup} proposed an approach for equivariant KANs, focusing on continuous groups. 

While these approaches illustrate the potential of KANs for learning data with symmetries, the general principles for constructing equivariant KA layers for the important case of arbitrary \emph{permutation symmetry groups} remain unexplored. In particular, equivariant KANs have not yet been developed for numerous important data types that exhibit permutation symmetries, including multi-set interactions \citep{hartford2018deep}, sets with symmetric elements \citep{maron2020learning}, weight spaces \citep{navon2023equivariant,zhou2023permutation}, hierarchical structures \citep{wang2020equivariant}, and high order relational data \citep{maron2019invariant}.% and many others.

\textbf{Our approach.} In this work, we introduce Function Sharing KAN (FS-KAN), a general framework for constructing KANs with equivariant and invariant KA layers with respect to \emph{arbitrary permutation symmetry} groups. Our construction generalizes the well-known parameter-sharing schemes used by previous equivariant architectures \citep{wood1996representation, ravanbakhsh2017equivariance, maron2019invariant} by constraining KA layers to share functions rather than weights or parameters. 
In particular, FS-KANs generalize several previously proposed equivariant KANs for sets \citep{kashefi2024pointnetkanversuspointnet} and images \citep{bodner2024convolutionalkolmogorovarnoldnetworks}. We discuss several typical function sharing scenarios that arise in natural data, like direct product symmetries and symmetries involving high-order tensors, and propose a more efficient FS-KAN variant to reduce computational and memory costs.

From a theoretical perspective, we prove that FS-KAN architectures have equivalent expressive power to standard parameter-sharing MLPs in the uniform function approximation sense, implying that both architecture classes can represent the same functions. This equivalence establishes both fundamental limitations and guarantees for FS-KAN expressivity derived from equivalent results for parameter-sharing neural networks, including %Key implications include
expressivity analysis for parameter-sharing-based GNNs \citep{chen2019equivalence,geerts2020expressive, maron2019invariant,maron2019provably,azizian2020expressive} and universality theorems for invariant or equivariant networks \citep{zaheer2017deep,yarotsky2022universal,segol2019universal,maron2019universality,keriven2019universal} — all of which directly transfer using our result to their FS-KAN counterparts. 

Our experiments across multiple data types with varying types of permutation symmetries demonstrate that FS-KANs excel when learning on data with symmetries, achieving excellent parameter efficiency and significantly outperforming standard parameter-sharing networks in low-data regimes. Moreover, we show that FS-KANs inherit the enhanced interpretability and adaptability of KANs, providing both transparent feature learning and robust performance in continual learning scenarios.

\textbf{Contributions.} 
%\begin{enumerate}
    %\item 
    \textbf{(i)} We propose FS-KAN, a principled framework for designing equivariant and invariant KA layers under arbitrary permutation symmetry groups using function sharing. %We also introduce an efficient version of FS-KAN.
    %\item 
    \textbf{(ii)} We provide a theoretical analysis showing that FS-KANs match the expressivity of parameter-sharing MLPs and use this result to transfer well-known expressivity results to FS-KANs.
    %\item
    \textbf{(iii)} We demonstrate empirically, across multiple domains, that FS-KANs excel when training data is limited, outperforming parameter-sharing MLPs and other natural baselines in data-scarce settings.
% \end{enumerate}

\vspace{-10pt}

\section{Related Work} \label{section: related work}
\vspace{-7pt}
% Our work lies at the intersection of equivariant neural networks designed with parameter-sharing schemes and KANs. In this section, we review relevant literature across these areas and discuss prior work on adapting KANs to respect symmetry constraints.

\textbf{Equivariance in deep learning.}  
 A function is equivariant to a group action if a group transformation of the input leads to a corresponding transformation of the output. Perhaps the most well-known example of this principle is convolutional neural networks (CNNs) \citep{lecun1998gradient} that employ translation equivariant layers; more recent works have generalized equivariance to broader symmetry groups \citep{cohen2016group, ravanbakhsh2017equivariance, qi2017pointnet, zaheer2017deep, esteves2018learning,kondor2018covariant,weiler2019general,pan2022permutation,hands2024p,zhang2024schur}. For instance, \citet{zaheer2017deep} characterized permutation equivariant layers in the context of set inputs, paving the way for many set-based architectures such as \citet{hartford2018deep,maron2020learning}. \citet{kondor2018covariant,maron2019invariant,maron2019provably,pan2022permutation} studied equivariant layers for graphs, and \citet{cohen2016group} introduced % group equivariant convolutions%,
 a framework for building networks equivariant to additional transformations like rotations and reflections. Recent work by \citet{navon2023equivariant,zhou2023permutation} leverages symmetries within neural networks to develop effective model editing and analysis techniques. As constraining models to be equivariant might lead to models with limited expressive power \citep{xu2019powerfulgraphneuralnetworks,morris2019weisfeiler, geerts2020expressive}, numerous works in recent years studied and characterized the expressive power of equivariant models \citep{zaheer2017deep,maron2019provably,maron2019universality,dym2020universalityrotationequivariantpoint, ravanbakhsh2020universalequivariantmultilayerperceptrons,azizian2020expressive}.

\textbf{Kolmogorov–Arnold Networks (KANs) and equivariant KANs.}  
KANs \citep{liu2024kankolmogorovarnoldnetworks} are a recently proposed class of neural architectures motivated by the Kolmogorov–Arnold representation theorem \citep{tikhomirov1991representation}. They provide a promising alternative to traditional MLPs due to their improved interpretability, adaptability, and parameter efficiency \citep{alter2024robustnesskolmogorovarnoldnetworksadversarial, samadi2024smoothkolmogorovarnoldnetworks,somvanshi2024survey} and have demonstrated impressive performance  \citep{xu2024kolmogorovarnoldnetworkstimeseries, moradi2024kolmogorovarnoldnetworkautoencoders,yang2024endowing}.
Recent efforts to combine the strengths of KANs with the benefits of symmetry-aware learning have led to several equivariant adaptations of KANs. On graph-structured data, \citet{bresson2024kagnnskolmogorovarnoldnetworksmeet} integrates KAN into message-passing frameworks and achieves competitive performance with respect to MLP-based GNNs. Other works have introduced variants such as Convolutional KANs \citep{bodner2024convolutionalkolmogorovarnoldnetworks}, Graph-KAN \citep{kiamari2024gkangraphkolmogorovarnoldnetworks}, PointNet-KAN \citep{kashefi2024pointnetkanversuspointnet}. Very recently, \citet{hu2025incorporatingarbitrarymatrixgroup} proposed equivariant KANs for continuous groups. While theoretically applicable to permutation groups, their approach requires solving numerically for equivariant layers compared to our straightforward efficient sharing schemes and, unlike our approach, cannot handle variable-sized inputs (e.g., sets and graphs). 
To summarize, while prior works have developed effective equivariant KANs for specific permutation groups, we provide a framework that unifies many of these works and establishes their theoretical foundations.

\vspace{-10pt}
\section{invariant and Equivariant function sharing Networks} \label{section: invariant and equivariant fs layers}
\vspace{-10pt}

In this section, we formulate invariant and equivariant KA layers for symmetry groups $G\leq S_n$ (\secref{subsec: equivariant layers}). We then introduce a variant called  \emph{efficient FS-KA layer} , which can reduce both time and memory complexity.
We provide concrete examples of FS-KA layers and demonstrate how our framework handles specific, yet important types of permutation groups including direct-product symmetries and high-order tensor symmetries (\secref{subsec: generalizations of fs layers}).  We use the notation $[n]$ to denote $\{1, \dots, n\}$.

\subsection{Preliminaries} \label{subsec: Preliminaries} 

\textbf{Kolmogorov-Arnold Networks.}  A KA layer $\Phi: \reals^{n_\text{in}} \to \reals^{n_\text{out}} $ has the form of 
$
    \Phi(\mathbf{x})_q = \sum_{p=1}^{n_\text{in}} \phi_{q,p}(x_p) , \quad q \in [n_\text{out}] $,
where $\phi_{q,p}: \mathbb{R} \to \mathbb{R}$ are learned univariate functions. While the original KAN work uses splines to parameterize these functions, alternative parameterizations have also been explored \citep{li2024kolmogorovarnoldnetworksradialbasis,ss2024chebyshevpolynomialbasedkolmogorovarnoldnetworks}. Our construction in this section does not assume a specific parameterization. Similar to \citet{liu2024kankolmogorovarnoldnetworks}, we will use a $n_\text{out}\times n_\text{in}$ matrix of functions to represent the layers:
\vspace{-10pt}
\begin{equation} \label{eq: matrix represtion of ka}
    \Phi(\mathbf{x}) = 
        \begin{bmatrix} 
        \phi_{1,1}(\cdot) & \dots  & \phi_{1,n_\text{in}}(\cdot)\\
        \vdots & \ddots & \vdots\\
        \phi_{n_\text{out},1}(\cdot) & \dots  & \phi_{n_\text{out},n_\text{in} (\cdot)} 
        \end{bmatrix} \star \begin{bmatrix} 
        x_1 \\
        \vdots \\
        x_{n_\text{in}} 
    \end{bmatrix} .
\end{equation}
Where the '$\star$' operator denotes applying the KA layer to the vector.

\textbf{Invariance, equivariance, and standard parameter-sharing.} Let $G\leq S_n$ be a group with representations $(V,\rho ), (V', \rho')$. A function $L: V \rightarrow V'$ is $G$-equivariant if it commutes with the group action, i.e., for all $g \in G$ and $\mathbf{v} \in V$, $L(\rho(g)v) = \rho'(g)L(v)$. When using linear layers $L(v)=Wv$, this equivariance constraint imposes structure on the weight matrix $W$ : $
    \rho'(g)^{-1} W \rho(g) = W, \quad \forall g\in G $
\citep{wood1996representation, ravanbakhsh2017equivariance,maron2019invariant, finzi2021practical}
.
Specifically, when $ V=V'=\mathbb{R}^{n}$ and $\rho=\rho'$ are the permutation representations of $G\leq S_n$, the weights $W\in\mathbb{R}^{n\times n}$ must satisfy the parameter-sharing condition to be equivariant:
$
    W_{i,j} = W_{\sigma(i), \sigma(j)} , \quad \forall \sigma \in G
$. 
For example, convolutions, which are represented as circulant matrices, follow a parameter-sharing scheme for the cyclic group $C_n$. In the case of an invariant layer $L: \mathbb{R}^n \rightarrow \mathbb{R}$, where the group acts trivially on the output space, invariance induces the following sharing scheme on the weight vector $\mathbf{w}\in \mathbb{R}^n$:
%\begin{equation}
 $   \mathbf{w}_{j} = \mathbf{w}_{\sigma(j)}, ~ \forall \sigma \in G. $
%\end{equation}
These constraints also generalize to higher-order vector spaces by the Kronecker product \citep{maron2019invariant}.  % In affine layers, the constraint on the bias vector is the same as the invariant case.
These fundamental constraints form the basis of parameter-sharing in equivariant networks, as it defines a principled approach for constructing equivariant linear \emph{layers}. Equivariant and invariant \emph{networks} can then be constructed using a composition of parameter-sharing linear layers and point-wise nonlinearities.
Going forward, we simplify the notation of the permutation group action by omitting $\rho$, formally ~ 
   $ (\rho(g)\mathbf{x})_i = (g \cdot \mathbf{x})_i = x_{g^{-1}(i)}, \quad g\in G$.

%\subsection{Characterization of invariant and equivariant Function Sharing KA layers} \label{subsec: equivariant layers}
\subsection{Invariant and equivariant Function Sharing KA layers} \label{subsec: equivariant layers}
First, we define the Function Sharing (FS) scheme in the equivariant setting. Intuitively, and motivated by standard parameter-sharing schemes, the idea is that the functions in the layer are tied together according to the group action.

\begin{prop} \label{prop: equivariant fs layer}
    Let $\Phi$ be a KA layer with $n_\text{in}=n_\text{out}=n$. We say that $\Phi$ is a \textbf{G-equivariant Function Sharing (FS)} KA layer if and only if 
    \begin{equation} \label{eq: equivariant fs condition}
        \phi_{q,p}  = \phi_{\sigma(q),\sigma(p)} , \quad\forall p,q\in [n] ,~ \forall \sigma\in G.
    \end{equation}
    Moreover, any such G-equivariant FS-KA layer is G-equivariant (proof in \appxref{appendix: proof of equivariant fs layer}).
\end{prop}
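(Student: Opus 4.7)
The plan is to verify the equivariance condition $\Phi(g \cdot \mathbf{x}) = g \cdot \Phi(\mathbf{x})$ coordinate-wise for an arbitrary $g \in G$ and $\mathbf{x} \in \mathbb{R}^{n}$, by directly unpacking both sides using the definition of the KA layer, the stated action $(g\cdot \mathbf{x})_i = x_{g^{-1}(i)}$, and the function sharing condition \eqref{eq: equivariant fs condition}. Since the ``if and only if'' clause in the statement is the definition of an FS layer, the only nontrivial content is to show that such an FS layer is equivariant.

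First, I would expand the $q$-th coordinate of $\Phi(g\cdot \mathbf{x})$ as $\sum_{p=1}^{n}\phi_{q,p}(x_{g^{-1}(p)})$, and then perform the change of summation variable $p' = g^{-1}(p)$ (valid because $g$ is a permutation of $[n]$) to rewrite it as $\sum_{p'=1}^{n}\phi_{q,g(p')}(x_{p'})$. Next, I would expand the $q$-th coordinate of $g\cdot \Phi(\mathbf{x})$ directly as $\Phi(\mathbf{x})_{g^{-1}(q)} = \sum_{p=1}^{n}\phi_{g^{-1}(q),p}(x_p)$.

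The key step is to reconcile the two expressions term by term. Applying the function sharing condition \eqref{eq: equivariant fs condition} with $\sigma = g$ to the index pair $(g^{-1}(q),\,p')$ yields $\phi_{g^{-1}(q),\,p'} = \phi_{g(g^{-1}(q)),\,g(p')} = \phi_{q,\,g(p')}$, which matches the reindexed sum term by term. Summing over $p'$ (equivalently, $p$) concludes the identity for each $q\in[n]$, giving the desired $\Phi(g\cdot \mathbf{x}) = g\cdot \Phi(\mathbf{x})$ for all $g\in G$.

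There is no real obstacle here; the argument is essentially bookkeeping once the substitutions are lined up correctly. The only thing to be careful about is the direction of the permutation: the sharing condition is stated with $\sigma$ acting on both indices in the same direction, while the action pushes $x_p$ to coordinate $g^{-1}(p)$ on the input side and $\Phi(\mathbf{x})_q$ to coordinate $g^{-1}(q)$ on the output side. Using the fact that $G$ is a group (so $g\in G \Leftrightarrow g^{-1}\in G$) ensures the sharing condition can be applied in whichever direction is needed to match both sides.
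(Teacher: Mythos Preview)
Your proposal is correct and follows essentially the same route as the paper's proof: expand $\Phi(g\cdot\mathbf{x})_q$, reindex the sum via the bijection $p'=g^{-1}(p)$, and then invoke the function-sharing condition to match the expression for $(g\cdot\Phi(\mathbf{x}))_q$. The only cosmetic difference is that the paper phrases the computation as verifying $\sigma^{-1}\circ\Phi\circ\sigma=\Phi$ rather than $\Phi\circ g=g\circ\Phi$, but the underlying manipulations are identical.
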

\vspace{-0.5em}

This structure is conceptually similar to standard parameter-sharing: as illustrated in \figref{fig:equivariant layers for c5}, in their matrix form, linear convolution layers and KA-convolution layers \citep{bodner2024convolutionalkolmogorovarnoldnetworks} have the structure of a circulant matrix. However, unlike the linear parameter-sharing case, we note that there exist KA equivariant layers that \emph{do not} follow FS structure. For example, consider the following $S_2$ equivariant KA layers :\begin{equation} \label{example: S3 FS and not FS layers}
    \Phi() = \begin{bmatrix}
        cos(\cdot) + 2 & sin(\cdot) -2  \\
        sin(\cdot) + 3  & cos(\cdot) -3  \\
    \end{bmatrix} \quad, \quad
    \hat{\Phi}() = \begin{bmatrix}
    cos(\cdot) & sin(\cdot)\\
        sin(\cdot) & cos(\cdot) \\
    \end{bmatrix} .
\end{equation}
Both layers compute the same function; however, only $\hat{\Phi}$ is an FS-KA layer. As we will show next, every $G$-equivariant KA layer can nonetheless be represented by an FS layer: (proof on \appxref{appendix: proof of equivariant fs layer is enough})

\begin{prop} \label{prop: equivariant fs is enough}
    Let $\Phi$ be a $G$-equivariant KA layer. Then, there exists a $G$-equivariant FS-KA layer $\hat{\Phi}$ such that % \begin{equation}
      $  \hat{\Phi}(\mathbf{x}) = \Phi(\mathbf{x}) , 
        ~ \forall \mathbf{x}\in \reals^n $ .
    % \end{equation}
\end{prop}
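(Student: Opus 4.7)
\textbf{Plan for Proposition~\ref{prop: equivariant fs is enough}.} The approach is group averaging. Given a $G$-equivariant KA layer $\Phi$ with functions $\{\phi_{q,p}\}$, the plan is to define a candidate FS layer $\hat{\Phi}$ by symmetrizing the functions across orbits:
\begin{equation}
    \hat{\phi}_{q,p}(t) \;=\; \frac{1}{|G|} \sum_{\sigma \in G} \phi_{\sigma(q),\sigma(p)}(t),
\end{equation}
and then verify two things: (i) $\hat{\Phi}$ satisfies the FS condition of \propref{prop: equivariant fs layer}, and (ii) $\hat{\Phi}(\mathbf{x}) = \Phi(\mathbf{x})$ for all $\mathbf{x} \in \reals^n$.

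The first verification is purely formal: for any $\tau \in G$, substitute $\sigma' = \sigma\tau$ in the average and use that $\sigma \mapsto \sigma\tau$ is a bijection of $G$ to conclude $\hat{\phi}_{\tau(q),\tau(p)} = \hat{\phi}_{q,p}$, which is exactly \eqref{eq: equivariant fs condition}.

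For the second verification, the key step is to unpack the equivariance of $\Phi$. Writing $\Phi(\sigma \cdot \mathbf{x})_q = (\sigma \cdot \Phi(\mathbf{x}))_q = \Phi(\mathbf{x})_{\sigma^{-1}(q)}$ and re-indexing the inner sum by $p \mapsto \sigma(p)$, one obtains the identity
\begin{equation}
    \sum_{p=1}^{n} \phi_{\sigma(q),\sigma(p)}(x_p) \;=\; \Phi(\mathbf{x})_q \qquad \forall \sigma \in G,~ \forall q \in [n],~ \forall \mathbf{x}\in\reals^n.
\end{equation}
Averaging both sides over $\sigma \in G$ gives $\hat{\Phi}(\mathbf{x})_q = \Phi(\mathbf{x})_q$, as desired.

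The main conceptual subtlety — and the reason the statement is not trivially just ``equivariance implies FS'' — is that $G$-equivariance of $\Phi$ only constrains the \emph{sums} $\sum_p \phi_{q,p}(x_p)$, not the individual univariate functions; indeed, the counterexample in \eqref{example: S3 FS and not FS layers} shows that row-wise additive constants can break the pointwise FS relation while preserving equivariance. What makes the averaging argument work despite this ambiguity is that the per-$\sigma$ identity above asserts that the ``reshuffled'' layer built from $\{\phi_{\sigma(q),\sigma(p)}\}$ already computes $\Phi$ itself, so every term in the average is equal to $\Phi(\mathbf{x})$ on the sum level — making the averaging step an equality rather than merely an approximation. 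I expect no further obstacle; the argument should be short once this observation is isolated.
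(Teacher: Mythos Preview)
Your proposal is correct and is in fact a cleaner argument than the paper's. The paper does \emph{not} use group averaging. Instead, it first invokes an auxiliary lemma (if $\sum_i f_i(x_i)=0$ for all $\mathbf{x}$ then each $f_i$ is constant) to deduce from equivariance that any two functions $\phi_{q,p}$ and $\phi_{\sigma(q),\sigma(p)}$ in the same $G$-orbit differ only by an additive constant $C_{q,p}$; it then shows the row sums $\alpha_q=\sum_p C_{q,p}$ are constant along $G$-orbits of $[n]$, and finally sets $\hat\phi_{q,p}=\phi_{q_h,p_h}+\tfrac{1}{n}\alpha_{q_h}$ for an orbit representative $(q_h,p_h)$, verifying equality via orbit-indicator bookkeeping. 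Your averaging definition $\hat\phi_{q,p}=\tfrac{1}{|G|}\sum_\sigma\phi_{\sigma(q),\sigma(p)}$ sidesteps all of this: the FS condition is immediate by reindexing, and the equality $\hat\Phi=\Phi$ follows because your key identity $\sum_p\phi_{\sigma(q),\sigma(p)}(x_p)=\Phi(\mathbf{x})_q$ makes every summand in the average already equal to $\Phi(\mathbf{x})_q$. The paper's route has the side benefit of explicitly characterizing the ambiguity (orbit-constant shifts summing to an orbit-invariant row total), but that structure is not used elsewhere, so your argument is strictly more economical for the stated proposition.
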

\vspace{-0.5em}
Importantly, this result allows us to restrict attention to FS layers when designing equivariant KANs without losing generality, simplifying both theoretical analysis and practical implementation.

\begin{wrapfigure}[13]{r}{0.45\linewidth}
  \centering
  \vspace{-13pt}
  \subcaptionbox{Parameter-sharing\label{subfig:ws for c5}}{%
      \includegraphics[width=0.44\linewidth]{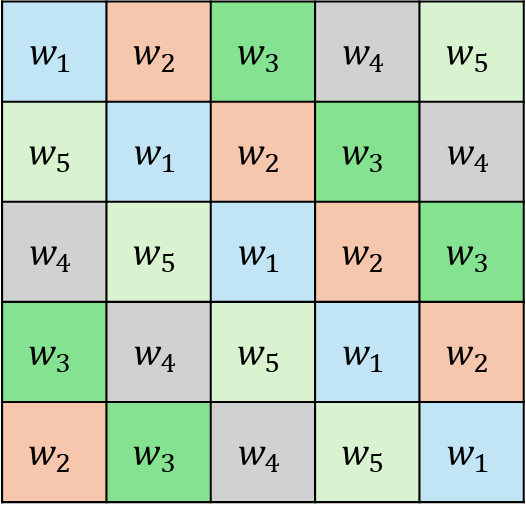}%
  }%
  \hspace{0.1em} 
  \subcaptionbox{Function sharing\label{subfig:fs for c5}}{%
      \includegraphics[width=0.44\linewidth]{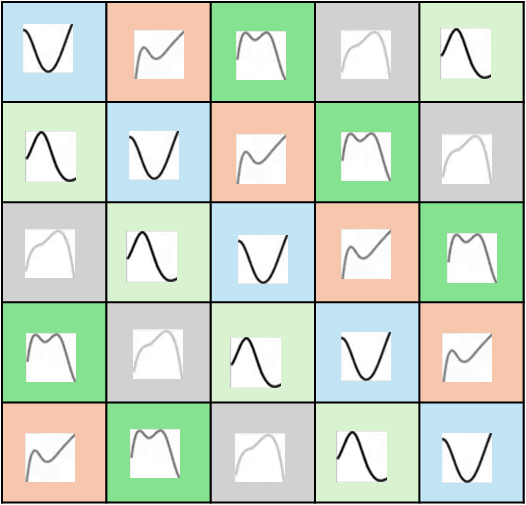}%
  }%
  \caption{Parameter-sharing (a) and FS (b) for $C_5$ equivariant layers (1D-convolutional layers). Function sharing constrains the \emph{functions} to be shared across the matrix according to the group action.}
  \label{fig:equivariant layers for c5}
\end{wrapfigure}

\textbf{Invariant layers.}\label{subsec: invariant layers}
We define $G$-invariant FS-KA layers as follows: (proof on \appxref{appendix: proof of invariant fs layer})

\begin{prop} \label{prop: invariant fs} 
    Let $\Phi$ be a KA layer with $n_\text{in} = n ,n_\text{out}=1$. If $
    , \forall p\in [n] , ~ \forall \sigma\in G , ~ \phi_{p}  = \phi_{\sigma(p)} $,
    then $\Phi$ is an $G$-invariant layer, and we call it \textbf{$G$-invariant FS} KA layer.
\end{prop}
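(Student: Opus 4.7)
The plan is to verify invariance directly from the definitions by showing $\Phi(g \cdot \mathbf{x}) = \Phi(\mathbf{x})$ for every $g \in G$ and every $\mathbf{x} \in \reals^n$. Since $n_\text{out} = 1$, the KA layer collapses to the scalar sum $\Phi(\mathbf{x}) = \sum_{p=1}^n \phi_p(x_p)$, so the goal reduces to showing that this sum is unchanged when we replace $x_p$ by $(g\cdot\mathbf{x})_p = x_{g^{-1}(p)}$.

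First, I would substitute the group action into the defining sum, obtaining $\Phi(g\cdot\mathbf{x}) = \sum_{p=1}^n \phi_p\bigl(x_{g^{-1}(p)}\bigr)$. Next, I would apply the FS hypothesis with $\sigma = g^{-1}$ (which lies in $G$ since $G$ is a group) to rewrite $\phi_p = \phi_{g^{-1}(p)}$, turning each summand into $\phi_{g^{-1}(p)}\bigl(x_{g^{-1}(p)}\bigr)$. Finally, I would perform the change of index $q = g^{-1}(p)$; since $g^{-1}$ is a bijection on $[n]$, this reindexing preserves the sum and yields $\sum_{q=1}^n \phi_q(x_q) = \Phi(\mathbf{x})$, completing the argument.

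The proof is essentially a direct computation, so there is no genuine obstacle. The only subtlety worth being careful about is the direction in which one applies the sharing relation: one must invoke the hypothesis with $\sigma = g^{-1}$ (not $g$) so that the function index matches the coordinate index produced by the action. Because this step uses closure of $G$ under inversion, it is the only place where the group structure (as opposed to mere closure under the given $g$) is needed.
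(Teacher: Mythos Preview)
Your proposal is correct and follows essentially the same direct computation as the paper's proof. The only cosmetic difference is the order of the two key steps: the paper first reindexes the sum $\sum_p \phi_p(x_{\sigma^{-1}(p)}) = \sum_p \phi_{\sigma(p)}(x_p)$ and then applies the FS condition $\phi_{\sigma(p)} = \phi_p$, whereas you apply the FS condition first (with $\sigma = g^{-1}$) and reindex afterward; both orderings are equally valid and yield the same argument.
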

\vspace{-0.5em}

Similarly, the $G$-invariant FS-KA layers can express any $G$-invariant KA layer: (proof on \appxref{appendix: proof of invariant fs layer is enough})

\begin{prop} \label{prop: invariant fs is enough}
    every $G$-invariant KA layer has an equivalent $G$-invariant FS-KA layer representation.
\end{prop}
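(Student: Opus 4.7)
The plan is to use a Reynolds-style symmetrization (orbit averaging) over the group $G$, mirroring classical arguments that produce $G$-invariant objects from arbitrary ones. Concretely, given an arbitrary $G$-invariant KA layer $\Phi(\vx) = \sum_{p=1}^n \phi_p(x_p)$, I would define a candidate FS layer $\hat\Phi(\vx) = \sum_p \hat\phi_p(x_p)$ by setting
\[
    \hat\phi_p(t) \;=\; \frac{1}{|G|} \sum_{g\in G} \phi_{g(p)}(t), \qquad p \in [n], \; t\in\reals.
\]
The proof then breaks into two independent checks: that $\hat\Phi$ satisfies the invariant FS sharing condition of \propref{prop: invariant fs}, and that $\hat\Phi$ agrees with $\Phi$ pointwise on $\reals^n$.

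First I would verify the sharing condition: for any $\sigma\in G$ and $p\in[n]$, the map $g \mapsto g\sigma$ is a bijection of $G$, so reindexing the sum gives $\hat\phi_{\sigma(p)}(t) = \frac{1}{|G|}\sum_{g}\phi_{g(\sigma(p))}(t) = \frac{1}{|G|}\sum_{g'}\phi_{g'(p)}(t) = \hat\phi_p(t)$, which is exactly \eqref{eq: equivariant fs condition} specialized to the invariant case. Second, I would compute $\hat\Phi(\vx)$ by interchanging the two summations and performing the change of variables $q = g(p)$ inside each inner sum:
\[
    \hat\Phi(\vx) \;=\; \frac{1}{|G|}\sum_{g\in G}\sum_{p=1}^n \phi_{g(p)}(x_p) \;=\; \frac{1}{|G|}\sum_{g\in G}\sum_{q=1}^n \phi_q(x_{g^{-1}(q)}) \;=\; \frac{1}{|G|}\sum_{g\in G}\Phi(g\cdot\vx).
\]
At this point the $G$-invariance hypothesis on $\Phi$ collapses the average to $\Phi(\vx)$, completing the equivalence.

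This approach is entirely analogous to the construction used for \propref{prop: equivariant fs is enough} in \appxref{appendix: proof of equivariant fs layer is enough}, but simpler because the output space carries the trivial representation, so only a single average over $G$ (rather than a double average or a twisted average) is needed. I do not anticipate a genuine obstacle: both directions rely only on the rearrangement of a finite sum and the invariance of $\Phi$. The only subtlety worth flagging is bookkeeping with the group action convention $(g\cdot\vx)_q = x_{g^{-1}(q)}$ from \secref{subsec: Preliminaries}, which must be used consistently when substituting $q = g(p)$ in the double sum; mishandling this inversion is the most likely source of a sign/indexing error, but it is a purely mechanical issue rather than a conceptual obstacle.
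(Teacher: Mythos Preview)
Your Reynolds-averaging argument is correct and complete: the reindexing establishes the sharing condition, and the double-sum manipulation together with the invariance of $\Phi$ collapses the average to $\Phi(\vx)$.

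However, your approach is \emph{not} the one the paper takes, nor is it analogous to the paper's proof of \propref{prop: equivariant fs is enough} as you suggest. The paper proves \propref{prop: invariant fs is enough} by a reduction rather than a direct construction: it broadcasts the invariant layer $\Phi$ to an equivariant layer $\Phi^{(bc)}:\reals^n\to\reals^n$ (copying $\Phi$ into every output coordinate), invokes \propref{prop: equivariant fs is enough} to obtain an equivalent equivariant FS-KA layer $\hat\Phi^{(bc)}$, and then takes $\hat\Phi(\vx) = \hat\Phi^{(bc)}(\vx)_1$. The FS condition for $\hat\Phi$ is then extracted from the equivariant FS condition on $\hat\Phi^{(bc)}$. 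Moreover, the paper's proof of \propref{prop: equivariant fs is enough} itself does not use Reynolds averaging either --- it works with orbit representatives and the key lemma that if $\sum_i f_i(x_i) \equiv 0$ then each $f_i$ is constant, then redistributes the resulting constants across the layer.

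Your direct averaging is more elementary and entirely self-contained: it avoids both the broadcast detour and any dependence on the constant-difference lemma. The paper's route has the modest virtue of reusing the equivariant result, but at the cost of an indirect argument. One small note: your remark that your approach is ``entirely analogous'' to the paper's equivariant construction is a mischaracterization of what the paper actually does there, though this does not affect the validity of your own proof.
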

\vspace{-0.5em}
% (Full proof on Appendix \ref{appendix: proof of invariant fs layer is enough})
% \input{proofs/invariant_fs_is_enough_proof}

\textbf{Feature channels.} \label{subsec: layers with features}
So far, we have considered inputs with a single feature per element. When learning on data with symmetries, it is often useful to consider multiple feature channels, for example, the color channels in images. % or the node features in graphs.
For the general case of multiple input and output features, we extend the formulation of KA layers for $n$ elements with $d_\text{in}, d_\text{out}$ features by:
\begin{equation} \label{eq: decompositon of fs for channels}
    \Phi(\mathbf{x})_q = \sum_{p\in [n]} \Phi^{q,p} (\mathbf{x}_{p}), \quad q \in [n] , \mathbf{x} \in \reals^{n\times d_\text{in}} ,
\end{equation}
Where each $\Phi^{q,p}: \mathbb{R}^{d_\text{in}} \to \mathbb{R}^{d_\text{out}}$ is a KA layer. This creates a matrix of $n\times n$ \emph{KA sub-layers}, similar to decomposing linear layers into sub-matrices. We use a similar formulation for the invariant case with $n_\text{out}=1$. We show that, in this case, the functions are \emph{externally} shared (\figref{subfig:fs for Sn}) across corresponding positions in the $\Phi^{q,p}$ sub-layers: (proof on \appxref{appx: proof for features}).

\begin{prop} \label{prop: multiple channels FS-KA layers}
    Let $\Phi$ be a $G$-equivariant (invariant) KA layer with $d_\text{in}, d_\text{out}$ input and output features. Then there exists KA layer $\hat{\Phi}$ s.t $\Phi , \hat{\Phi}$ represent the same function and the \emph{KA sub-layers} satisfy  : %\begin{align} 
    $
        \hat{\Phi}^{q,p} =  \hat{\Phi}^{\sigma(q),\sigma(p)}  % \quad \forall \sigma\in G % \\
        \left( \hat{\Phi}^{p} =  \hat{\Phi}^{\sigma(p)}  \right) , ~ \forall \sigma\in G .
    $
    We call $\hat{\Phi}$ a $G$-equivariant (invariant) FS-KA layer.
\end{prop}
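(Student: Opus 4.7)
The plan is to construct $\hat{\Phi}$ by symmetrizing $\Phi$ over the group orbit at the level of sub-layers, in the spirit of a Reynolds averaging operator. Concretely, I would define
$$\hat{\Phi}^{q,p} := \frac{1}{|G|} \sum_{\sigma \in G} \Phi^{\sigma^{-1}(q),\,\sigma^{-1}(p)},$$
where the sum denotes the pointwise addition of KA sub-layers viewed as functions $\reals^{d_\text{in}}\to\reals^{d_\text{out}}$. Since KA sub-layers are closed under pointwise sums (the internal univariate functions $\phi^{q,p}_{j,i}$ simply add coordinate-wise), each $\hat{\Phi}^{q,p}$ is itself a valid KA sub-layer, and hence $\hat{\Phi}$ is a bona fide KA layer in the form of \eqref{eq: decompositon of fs for channels}.

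The first main step is to show $\hat{\Phi}$ computes the same function as $\Phi$. Expanding $\hat{\Phi}(\mathbf{x})_q = \sum_p \hat{\Phi}^{q,p}(\mathbf{x}_p)$ and substituting $p' = \sigma^{-1}(p)$ inside the summand indexed by $\sigma$, the inner sum becomes $\sum_{p'} \Phi^{\sigma^{-1}(q),p'}\bigl((\sigma^{-1}\cdot \mathbf{x})_{p'}\bigr) = \Phi(\sigma^{-1}\cdot \mathbf{x})_{\sigma^{-1}(q)}$. Applying $G$-equivariance of $\Phi$, this equals $(\sigma^{-1}\cdot \Phi(\mathbf{x}))_{\sigma^{-1}(q)} = \Phi(\mathbf{x})_q$, so every term in the group average contributes $\Phi(\mathbf{x})_q$ and therefore $\hat{\Phi} \equiv \Phi$. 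The sub-layer sharing property then follows by a direct reindexing: for any $\tau\in G$, substituting $\sigma' = \tau^{-1}\sigma$ in the group sum gives $\hat{\Phi}^{\tau(q),\tau(p)} = \tfrac{1}{|G|}\sum_{\sigma'}\Phi^{\sigma'^{-1}(q),\sigma'^{-1}(p)} = \hat{\Phi}^{q,p}$, exactly the required condition.

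The invariant case is handled identically via the single-index average $\hat{\Phi}^{p} := \tfrac{1}{|G|}\sum_\sigma \Phi^{\sigma^{-1}(p)}$; since $G$ acts trivially on the output, the equivalence step uses only invariance $\Phi(\sigma\cdot\mathbf{x})=\Phi(\mathbf{x})$ rather than full equivariance, and the sharing check is the analogous one-index reindexing. The main obstacle I anticipate is not computational but conceptual, namely treating KA sub-layers as first-class objects that can be averaged while preserving the KA structure; this needs to be stated carefully. An alternative route that sidesteps this would be to view $\Phi$ as a scalar-input KA layer $\reals^{nd_\text{in}}\to\reals^{nd_\text{out}}$ and invoke \propref{prop: equivariant fs is enough} (resp.\ \propref{prop: invariant fs is enough}) under the embedded action $\sigma\cdot(p,i):=(\sigma(p),i)$, then reorganize the resulting univariate functions back into sub-layers; the sub-layer-level sharing drops out immediately from the scalar-level FS condition, at the cost of a slightly heavier bookkeeping step.
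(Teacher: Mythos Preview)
Your Reynolds-averaging argument is correct and complete. The closure of KA sub-layers under pointwise sums is exactly as you say (the univariate functions add coordinate-wise), the substitution $p'=\sigma^{-1}(p)$ together with equivariance collapses each summand to $\Phi(\mathbf{x})_q$, and the reindexing $\sigma'=\tau^{-1}\sigma$ gives the sharing condition immediately. The invariant case works the same way. Since $G\leq S_n$ is finite, dividing by $|G|$ is unproblematic.

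This is, however, a genuinely different route from the paper's. The paper does not average over the group; instead it replays the argument of \propref{prop: equivariant fs is enough} verbatim at the sub-layer level: one shows via Lemma~\ref{lemma: sum of constants} that for any $\sigma\in G$ the difference $\Phi^{\sigma(q),\sigma(p)}-\Phi^{q,p}$ is a constant function, introduces correction constants $C_{q,p}$ and their row-sums $\alpha_q$, and then defines $\hat{\Phi}^{q,p}$ as the orbit representative plus a carefully chosen additive shift $\tfrac{1}{n}\alpha_{q_h}$. Your approach is shorter and sidesteps Lemma~\ref{lemma: sum of constants} entirely; the paper's approach, while heavier on bookkeeping, makes explicit the structural fact that sub-layers in the same orbit differ only by constants, which is informative in its own right. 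Your alternative route (flattening to a scalar KA layer on $\reals^{nd_\text{in}}$ and invoking \propref{prop: equivariant fs is enough} under the action $\sigma\cdot(p,i)=(\sigma(p),i)$) is actually closest in spirit to what the paper does, since the paper's one-line proof amounts to ``repeat the scalar argument with $x_p$ a vector and $\phi_{q,p}$ a sub-layer.''
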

\vspace{-0.5em}
%\vspace{-2em}
% \haggai{if space is needed, we can also talk about this result briefly as a generalization to multimple features and put the theorem and longer discussion in the appendix}
% The proof is on Appendix \ref{appx: proof for features}. \\
We note that this FS pattern follows the same sharing scheme as equivariant linear layers with multiple channels. These components allow us to define \textbf{ $\mathbf{G}$ Function Sharing KA Networks ($G$-FS-KANs)} as finite compositions of $G$-equivariant and invariant FS-KA layers.
%\begin{definition}
    %\textbf{ $G$ Function Sharing KA Networks ($G$-FS-KANs)} are defined as finite compositions of $G$-equivariant and invariant FS-KA layers. 
%\end{definition}

 \textbf{Efficient FS-KA layers.} \label{subsection: Efficient fs layers}
%In practice, standard KA layers can be costly as they apply nonlinear functions independently across all input and output pairs, leading to significant computational and memory demands.
In practice, standard KA layers can have significant computational and memory demands as they apply functions independently across all input and output pairs. However, in many cases, the sharing structure requires the same operations to be performed across multiple elements, followed by a sum aggregation. In linear layers, to reduce time and memory complexity, it can be helpful to commute sum or mean pooling with matrix multiplications. % However, in the nonlinear KA setting, these optimizations do not yield equivalent layers.%, though, still preserves equivariance. 
Inspired by this, we introduce the 
\textbf{Efficient FS-KA layer}, which first \emph{aggregates} inputs according to the FS structure and \emph{then} applies a shared KA sub-layer. While this relaxation does not yield equivalent layers, it reduces the number of nonlinear function applications while preserving equivariance, which can be easily verified. % \haggai{equivariance can be easily verified?}% proof in appendix 
For example, consider a $S_n$-equivariant FS-KA layer (see \eqref{eq: deepset formulation}). It applies a KA transformation independently to each element before performing sum-pooling over the set. In contrast, the efficient variant computes: $\tilde{\Phi}(\mathbf{x})_q =\tilde{\Phi}_1 (\mathbf{x}_{q}) + \tilde{\Phi}_2 \left( \sum_{p=1}^n \mathbf{x}_{p} \right)$.
In this variant, we sum over \emph{all} elements so the second term can be computed once and reused (\emph{broadcast}) for other computations within the layer. % This change is inspired by \emph{broadcasting}, a technique in which outputs are reused across the layer to avoid repeated computations. This efficient FS-KA layer reduces the number of \textbf{distinct} univariate function evaluations by a factor of $2n/(n+1)$.% from $2n d_{\text{in}}  d_{\text{out}}$ to $(n+1)\cdot d_{\text{in}} d_{\text{out}}$ (factor of $2$).
 While the efficient FS-KA layer has the same number of parameters, it reduces memory usage, especially during training, by retaining smaller computational graphs. \\
For arbitrary permutation groups $G$, efficient FS-KA layers are derived following the same principle: commuting element aggregations (sum or mean pooling) with application of shared functions according to the original sharing structure. In general, the efficiency of the layer is determined by the group structure. We discuss the efficiency and provide a more formal construction of efficient FS-KA layers for general permutation symmetry groups in \appxref{appendix: examples of relaxed kans}.

% \vspace{-10pt}
\subsection{Examples and important cases} \label{subsec: generalizations of fs layers}
 %\vspace{-5pt}
We now provide concrete examples of FS-KA layers and networks and demonstrate how they extend to important symmetry types. We first show how FS-KANs generalize previous equivariant KANs, and illustrate their enhanced interpretability through a synthetic learning example.

\textbf{FS-KANs generalize previous equivariant KANs.} For a set of $d$-dimensional vectors $\mathbf{x}\in\mathbb{R}^{n\times d}$  , equivariant $S_n$-FS-KANs are composed of  $S_n$-equivariant FS-KA layers in the form of:\begin{equation} \label{eq: deepset formulation}
     \Phi(\mathbf{x})_q = \Phi_1(\mathbf{x}_{q}) +  \sum_{p \neq q} \Phi_2(\mathbf{x}_{p}) ~,
 \end{equation}
where $\Phi_1, \Phi_2$ are shared KA sub-layers. We note that these layers are similarly structured to the DeepSets layers from \citet{zaheer2017deep} and can be seen as a generalization of PointNet-KAN \citep{kashefi2024pointnetkanversuspointnet}. We illustrate the layer structure on \figref{subfig:fs for Sn}. Similarly to the networks composed of equivalent linear layers, our $S_n$-FS-KAN can process sets with varying sizes, and the sum-pooling can be replaced with other invariant aggregations, such as mean or max,  which preserve equivariance and might work better in practice. Another example is KA-convolution-based networks \citep{bodner2024convolutionalkolmogorovarnoldnetworks}, which are equivalent to our $C_n$-FS-KANs (Fig. \ref{subfig:fs for c5}).

%Another example is KA-convolution \citep{bodner2024convolutionalkolmogorovarnoldnetworks}, which handles multiple input and output channels. % \haggai{can we mention that we generalize KAN CNNs here?} %In some applications, vectors are passed through an embedding model before the aggregation step \citep{kashefi2024pointnetkanversuspointnet,zaheer2017deep,qi2017pointnet} , i.e. $\Phi_1,\Phi_2$ are shallow networks, which can be interpreted as a composition of equivariant layers where $\Phi_2$ is trivially zero and then a full equivariant layers.  dditionally, replacing the sum in other invariant aggregations, such as mean, max, or min, preserves equivariance and potentially changes the generalization to a different set sizes. % Same as the equivalent linear layer, the degree of freedoms in the layer do not depend on $n$.
% \haggai{removed here as there were too many details}
% This and other illustrative settings are depicted in \figref{subfig:fs for Sn}. 

\begin{figure}[t!]
  \centering
  \subcaptionbox{$S_n$\label{subfig:fs for Sn}}{%
      \includegraphics[width=0.25\linewidth]{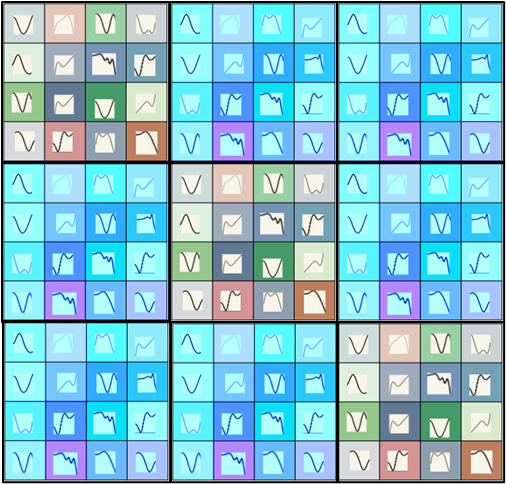}%
  }%
  \hspace{0.2em} 
  \subcaptionbox{$S_n \times C_m$ \label{subfig:fs for sn x cm}}{%
      \includegraphics[width=0.25\linewidth]{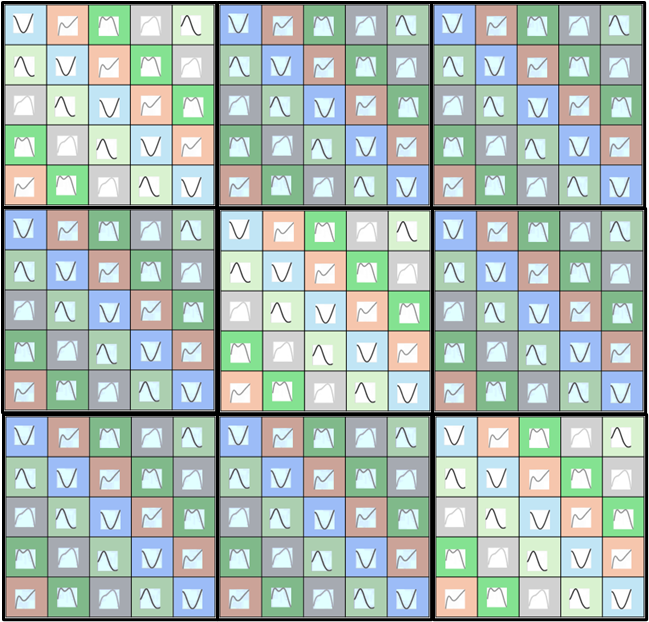}%
  }%
\hspace{0.2em} 
  \subcaptionbox{$S_n \times S_m$ \label{subfig:fs for sn x sm}}{%
      \includegraphics[width=0.25\linewidth]{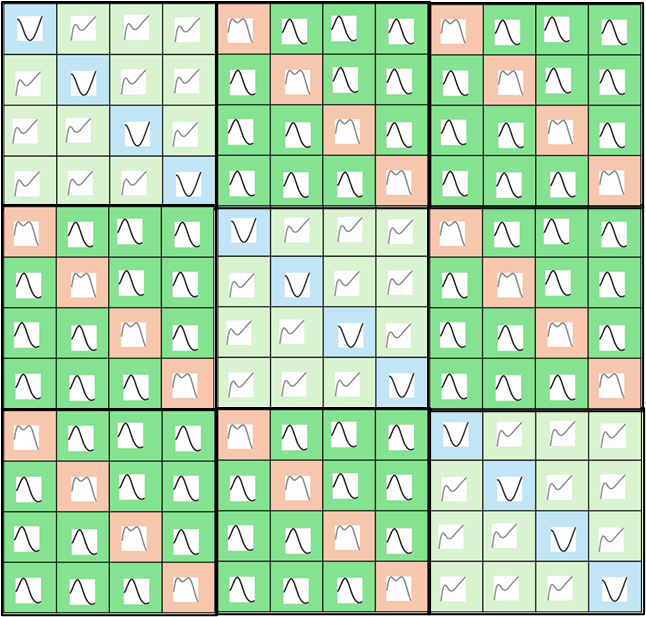}%
  }%
  \caption{Equivariant FS-KA layers for different groups. (a) is $S_3$-equivariant (with $d=4$ feature channels), (b) is $S_3 \times C_5$-equivariant ($d=1$) and  (c) is $S_3\times S_4$ equivariant ($d=1$). The functions in each sub-layer in (b) are shared across the generalized diagonals (\emph{internal} sharing w.r.t $C_4$), while the sub-layers themselves are shared according to \eqref{eq: deepset formulation} (\emph{external} sharing w.r.t $S_3$).} %\haggai{explain internal and external on b?}}
  \label{fig:equivariant layers many examples}
\end{figure}

\textbf{Learning invariant formulas.} KANs are known for their superior interpretability through direct visualization of learned functions. To illustrate this in our case, we train both standard KAN and $S_n$-FS-KAN to learn $S_n$ invariant functions on synthetic data, following the methodology in \citet{liu2024kankolmogorovarnoldnetworks}.
\Figref{fig:interpretability_comparison} presents a visualization of the trained networks. Due to the function sharing scheme, $S_n$-FS-KAN learns shared spline functions across symmetric edges, making the equivariant structure immediately apparent and simplifying the learned network. In contrast, standard KAN learns independent spline functions for each edge, resulting in a more complex and less interpretable network that fails to respect underlying data symmetries. This demonstrates that FS-KAN not only maintains the interpretability benefits of standard KANs but enhances interpretability by making the equivariant structure explicit. We provide more examples in \appxref{appendix: symbolic formulas}.

 %We conclude with a discussion of our efficient FS-KAN variant.
%\haggai{While the layers discussed in the previous subsection generally apply to all permutation groups, here we provide a more detailed treatment of FS-Kan layers in two important real-world scenarios: direct product symmetries \citep{maron2020learning,wang2020equivariant}, and high-order tensors \citep{maron2018invariant,maron2019universality}. We then discuss our efficient FS-KAN variant.} 
  \begin{wrapfigure}[17]{r}{0.48\textwidth}
  \vspace{-10pt}
\centering
    \begin{subfigure}[b]{0.22\textwidth}
        \centering
        \includegraphics[width=\textwidth]{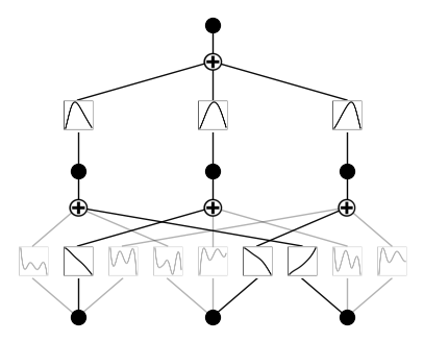}
        \caption{Standard KAN}
        \label{fig:kan_viz}
    \end{subfigure}
    \begin{subfigure}[b]{0.22\textwidth}
        \centering
        \includegraphics[width=\textwidth]{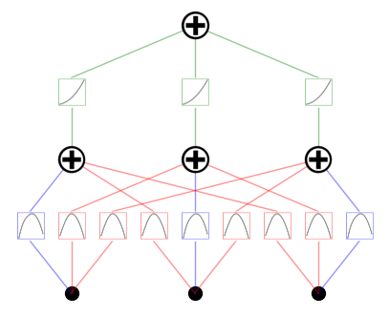}
        \caption{FS-KAN}
        \label{fig:fskan_viz}
    \end{subfigure}
    \caption{Visualization of learned spline functions for learning $f(x) = \exp(-x_1^2-x_2^2-x_3^2)$. FS-KAN (b) shares spline functions across symmetric edges (color-coded by function), with equivariant layers showing quadratic-like splines and the invariant output layer exhibiting exponential behavior, making the equivariant structure explicit and enhancing interpretability.}
    \label{fig:interpretability_comparison}
\end{wrapfigure}

 We now discuss specific instantiations of FS-KA layers to two key symmetry types: direct-product symmetries \citep{maron2020learning, wang2020equivariant} and high-order tensor symmetries \citep{maron2019invariant,maron2019universality,kondor2018covariant,keriven2019universal}.

\textbf{FS for direct-product symmetries.} For many data types that can be represented as matrices, the symmetry group can be formulated as a direct product of groups $G\times H$ where $G$ acts on the rows and $H$ acts on the columns. %formally $(g,h)\cdot A = \rho(g)A\rho'(h)^T$ for $g\in G,h \in H$.
For example, in 2D images, symmetry exists in both the width dimension (horizontal shifts) and the height dimension (horizontal and vertical shifts). 
Similarly to \citet{maron2020learning}, $G\times H$ equivariant FS-KA layers have \emph{external} sharing of KA \emph{sub-layers} (w.r.t $G$) and \emph{internal} sharing of \emph{functions} within each KA sub-layer (w.r.t to $H$). Figure \ref{subfig:fs for sn x cm} illustrates for $G = S_n, H=C_T$ how functions are shared \emph{internally} within each sub-layer, forming a circulant structure and effectively acting as $1D$ KA-convolutions \citep{bodner2024convolutionalkolmogorovarnoldnetworks}, while the sub-layers themselves are shared \emph{externally} within the overall layer. In \figref{subfig:fs for sn x sm}, we visualize the case when $G = S_n$ and $H = S_m$, as encountered in user-item interactions \citep{hartford2018deep}. The formal construction and theoretical analysis are provided in \appxref{appendix: direct product}. % where the layer follows the form of \eqref{eq: deepset formulation} with $\Phi_1, \Phi_2$ being $S_m$-equivariant FS-KA layers.

\textbf{FS for high-order tensors.} \label{paragraph: higher_order_tensor}
In many real-world applications, input data naturally appears as tensors of different orders $\mathbf{x}\in \mathbb{R}^{n^k}$, and the group action is the diagonal action, $(g\cdot \mathbf{x})_{i_1,\dots,i_k} = \mathbf{x}_{g^{-1}(i_1),\dots,g^{-1}(i_k)}$.  For example, in graph-structured data, node features are first-order tensors, while connectivity can be represented as a second-order tensor. The KA layer formulation for layers mapping $k$-order tensors to a $k'$-order tensor can be readily derived from \eqref{eq: decompositon of fs for channels} and is given by:
$
\Phi(\mathbf{x})_{\mathbf{q}} = \sum_{\mathbf{p} \in [n]^{k}} \Phi^{\mathbf{q}, \mathbf{p}}(\mathbf{x}_{\mathbf{p}}), 
\quad~ \mathbf{q} \in [n]^{k'}
$,
where each $\Phi^{\mathbf{q}, \mathbf{p}}: \mathbb{R}^{d_\text{in}} \to \mathbb{R}^{d_\text{out}}$ are standard KA sub-layers. In this case, the natural equivariant FS
structure can be derived from how the group acts on tensor indices -- $\Phi^{\mathbf{q}, \mathbf{p}} = \Phi^{\sigma(\mathbf{q}), \sigma(\mathbf{p})}$ for all $\sigma \in G$, where $\sigma(\mathbf{i}) = (\sigma(i_1), \dots, \sigma(i_k))$. %Our extension naturally shows how theoretical results for first-order tensors generalize to higher-order settings. 
This construction enables many important applications. For example, inspired by \citet{maron2019invariant}, we can build expressive GNNs \cite{maron2019provably} as well as networks for hypergraphs with applications including co-authorship networks \citep{feng2019hypergraphneuralnetworks, yadati2019hypergcnnewmethodtraining} and 3D mesh representation \citep{hajij2022simplicialcomplexrepresentationlearning}. We provide more details and a discussion in \appxref{appendix: high_order_tensors}.

\vspace{-10pt}

\section{A theoretical analysis of the expressive power of FS-KANs} \label{section: expressive power}
\vspace{-10pt}

While equivariant linear layers are common building blocks, some of the resulting invariant and equivariant architectures, %depending on the group,%
have known limitations -- for example, cannot represent all invariant functions in the graph domain \citep{maron2019universality,chen2019equivalence,geerts2020expressive}. In contrast, architectures like DeepSets \citep{zaheer2017deep} or PointNet \citep{qi2017pointnet} are universal for permutation-invariant functions for sets.
In this section, we investigate the expressivity implications of choosing between FS-KANs and equivariant networks composed of equivariant linear layers.  In particular, we show that for a specific permutation group $G$, both model families have the same expressive power in the uniform approximation sense. We begin by demonstrating that FS-KANs with splines activations can precisely implement any parameter-sharing MLP (within a bounded domain).
\vspace{-1em}
\begin{prop} \label{prop: FS is dense in WS}
    Let $\Omega$ be a bounded input domain, then for any parameter-sharing MLP $f$ with $l$ linear layers and ReLU activations, there exists FS-KAN $\Phi$ with at most $2l$ layers with splines activations such that:$
        f(\mathbf{x}) = \Phi(\mathbf{x}) , \forall \mathbf{x} \in \Omega$.
    \end{prop}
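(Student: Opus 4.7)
The plan is to produce an explicit, layer-by-layer construction: given a parameter-sharing MLP $f$, I build an FS-KAN $\Phi$ that realizes $f$ exactly on all of $\mathbb{R}^n$ (and hence on $\Omega$). The crux is the observation that both arbitrary linear maps and the ReLU activation are piecewise linear with at most one knot, and are therefore representable \emph{exactly} by first-order B-splines. So no approximation argument is needed; the construction is purely structural, and the bounded-domain hypothesis plays no role in this direction (it would only be needed for a converse).

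I would begin by writing $f$ as an alternating composition of linear layers $W_1,\dots,W_l$ satisfying the parameter-sharing condition $(W_i)_{q,p} = (W_i)_{\sigma(q),\sigma(p)}$ for all $\sigma \in G$ (in multi-channel form, decomposed into sub-matrices $W_i^{q,p}\in\mathbb{R}^{d_{\text{out}}\times d_{\text{in}}}$) and element-wise ReLU activations. For each $W_i$ I construct a KA layer whose sub-layer entries are the linear univariate functions $\phi_{i,s,t}^{q,p}(x) = (W_i^{q,p})_{s,t}\cdot x$. Then the parameter-sharing constraint on $W_i$ translates literally into the function-sharing constraint $\phi_i^{q,p} = \phi_i^{\sigma(q),\sigma(p)}$ of \propref{prop: equivariant fs layer} and \propref{prop: multiple channels FS-KA layers}, so the resulting layer is a valid $G$-equivariant FS-KA layer, and by construction it computes $W_i\mathbf{x}$ exactly.

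For each ReLU activation, I construct a KA layer whose functions are $\phi_{q,p} = \delta_{q,p}\cdot\mathrm{ReLU}$ --- ReLU on the diagonal and identically zero off the diagonal (and analogously at the sub-layer and sub-layer-entry levels in the multi-channel case). Plugging into $\Phi(\mathbf{x})_q = \sum_p \phi_{q,p}(x_p)$ immediately yields element-wise ReLU, as desired. The FS condition $\phi_{q,p}=\phi_{\sigma(q),\sigma(p)}$ is automatic because $q=p\iff \sigma(q)=\sigma(p)$, so this layer is $G$-equivariant FS for \emph{any} subgroup $G\leq S_n$. Since ReLU, the zero function, and any scalar linear function are all first-order splines, every layer produced by the construction lies within the spline parameterization of FS-KANs.

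Stacking these pieces gives at most $2l$ FS-KA layers that realize $f$ identically, which is what the proposition claims. The main conceptual "obstacle" is really just noticing the syntactic correspondence --- parameter sharing on a matrix lifts cleanly to function sharing as soon as each scalar weight is replaced by the linear univariate function it induces --- together with the mild observation that both ingredients of a ReLU-MLP happen to be representable exactly (not merely approximately) by splines of degree one. A minor care point is that the specific spline basis used by the KAN must contain degree-1 piecewise-linear functions with an arbitrary single knot; this is standard for B-spline parameterizations, and the proposition's hypothesis of a "spline activation" should be read in this sense.
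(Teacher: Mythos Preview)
Your structural argument is correct and in fact somewhat more direct than the paper's. The paper proceeds by first invoking Theorem~3.2 of \citet{wang2024expressivenessspectralbiaskans} to obtain, for each affine layer $L$, a KA layer $\Phi$ with $\Phi=L$ on $\Omega$; it then observes that this $\Phi$ is equivariant (because $L$ is) and applies \propref{prop: multiple channels FS-KA layers} to convert it into an equivalent FS-KA layer. You instead bypass both the external reference and the conversion step: by directly setting $\phi^{q,p}_{s,t}(x)=(W^{q,p})_{s,t}\,x$, the parameter-sharing constraint on $W$ becomes the function-sharing constraint verbatim, so the layer is FS from the outset. The ReLU step is essentially identical in both proofs (diagonal KA layer, automatically FS for every $G\leq S_n$). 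Your route is cleaner and makes the ``parameter sharing $\leadsto$ function sharing'' correspondence completely transparent.

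The one point where the paper is more careful and you are too quick is the role of the bounded domain. You assert that the bounded-domain hypothesis ``plays no role in this direction,'' but this is not right for the concrete B-spline parameterization that KANs actually use. A finite linear combination of B-splines on a finite grid cannot reproduce $x\mapsto ax$ (or ReLU) exactly on all of $\mathbb{R}$; it can only do so on the interval spanned by the knots. This is precisely why the paper routes through \citet{wang2024expressivenessspectralbiaskans}, whose construction uses grid size $G=2$ with degree-$k$ B-splines and explicitly requires a bounded domain. Your hedge at the end (``the specific spline basis\ldots must contain degree-1 piecewise-linear functions with an arbitrary single knot'') is in tension with your earlier claim and does not quite resolve it: the proposition says ``splines activations,'' which in this paper means the standard B-spline parameterization, and for that the boundedness of $\Omega$ is genuinely needed. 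A small related omission: your construction handles linear layers $W\mathbf{x}$ but not affine ones $W\mathbf{x}+\mathbf{b}$; this is easily patched by absorbing the (parameter-shared) bias into one of the univariate functions, but it should be said.
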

\vspace{-0.5em}
\textbf{Proof idea.} The key idea builds on \citet{wang2024expressivenessspectralbiaskans}, which shows that MLP layers with ReLU-like activations can be implemented using two KA layers: one for affine transformation and one for the point-wise ReLU. We decompose the parameter-sharing linear layer into sub-layers as in \eqref{eq: decompositon of fs for channels}, each realizable by a KA layer.  We use these realizations as the KA sub-layers, which compose an equivariant KA layer, and then apply \propref{prop: multiple channels FS-KA layers} to obtain an equivalent FS-KA layer.  Adding a point-wise FS-KA layer for the nonlinearity gives two FS-KA layers per MLP layer.  Composing these yields an FS-KAN that reproduces the parameter-sharing MLP. The full proof is on \appxref{appendix: proof of S is dense in WS}.

On the other hand, we will show next that parameter-sharing MLPs are capable of approximating FS-KANs to arbitrary precision (proof is on \appxref{appendix: proof of WS is dense in FS}) :

\begin{prop} \label{prop: WS is dense in FS}

    For any compact domain $\Omega$, $\epsilon > 0$, and FS-KAN $\Phi$ with $l$ layers and continuous activations, there exists a parameter-sharing MLP $f_\epsilon$ with at most $2l$ ReLU layers s.t. $||f - \Phi||_{\infty} < \epsilon$.

\end{prop}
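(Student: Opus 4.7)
Write $\Phi = \Phi_l \circ \cdots \circ \Phi_1$, where each $\Phi_i$ is a $G$-equivariant (or invariant) FS-KA layer. By \propref{prop: multiple channels FS-KA layers}, decompose each $\Phi_i$ as in \eqref{eq: decompositon of fs for channels} into KA sub-layers $\Phi_i^{q,p}$ satisfying $\Phi_i^{q,p}=\Phi_i^{\sigma(q),\sigma(p)}$ for every $\sigma\in G$. Each sub-layer is continuous because its univariate components are, so each $\Phi_i$ is continuous. Set $K_1:=\Omega$ and inductively let $K_{i+1}$ be a fixed compact neighborhood of $\Phi_i(K_i)$ (e.g.\ a closed $1$-inflation of the image). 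Each $\Phi_i$ is then uniformly continuous on $K_i$, with some modulus $\omega_i$.

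The idea is to replace each FS-KA layer by a parameter-sharing MLP carrying the same sharing pattern. Pick a set of orbit representatives $R_i$ for the $G$-action on index pairs. For each $(q,p)\in R_i$, apply the classical ReLU universal approximation theorem to build a single-hidden-layer MLP $M_i^{q,p}$ that approximates the continuous map $\Phi_i^{q,p}$ on the appropriate compact projection of $K_i$ to any prescribed precision, then copy $M_i^{q',p'}:=M_i^{q,p}$ for every $(q',p')$ in the $G$-orbit of $(q,p)$ so that the family $\{M_i^{q,p}\}$ inherits the FS-sharing pattern automatically. Because a sum of single-hidden-layer ReLU MLPs is itself a single-hidden-layer ReLU MLP (concatenate the hidden units, add the output heads), the aggregate $f_i(x)_q := \sum_p M_i^{q,p}(x_p)$ is a parameter-sharing MLP with exactly two affine layers separated by one ReLU, and by choosing the constituent approximations fine enough we can arrange $\sup_{x\in K_i}\|f_i(x)-\Phi_i(x)\| < \delta_i$ for any prescribed $\delta_i>0$.

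Composing gives $f := f_l\circ\cdots\circ f_1$, a parameter-sharing MLP with at most $2l$ ReLU affine layers. To obtain $\|f-\Phi\|_\infty<\epsilon$ on $\Omega$ I pick the tolerances top-down: set $\delta_l<\epsilon/l$, and for each $i<l$ choose $\delta_i$ small enough that after propagation through the uniform moduli $\omega_{i+1},\dots,\omega_l$ the contribution of the $i$-th layer's approximation error to the final output is at most $\epsilon/l$, and that the perturbed iterates remain inside $K_{i+1},\dots,K_l$ (which is why each $K_{i+1}$ was inflated beyond $\Phi_i(K_i)$). A standard telescoping estimate on $\|y_i - x_i\| \leq \delta_i + \omega_i(\|y_{i-1}-x_{i-1}\|)$, where $x_i,y_i$ are the true and approximated intermediate iterates, then yields $\|f-\Phi\|_\infty<\epsilon$.

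The main obstacle is exactly this error-propagation step: each $f_i$ is evaluated not at $\Phi_{i-1}\circ\cdots\circ\Phi_1(x)$ but at the perturbed iterate $f_{i-1}\circ\cdots\circ f_1(x)$, so controlling the overall error requires the uniform continuity of the tail $\Phi_l\circ\cdots\circ\Phi_{i+1}$ on the inflated compact buffers $K_{i+1}$, together with a careful top-down choice of the $\delta_i$. Respecting the FS/parameter-sharing structure is by contrast automatic, because we fit one MLP per orbit and broadcast it across the orbit.
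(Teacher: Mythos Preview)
Your proposal is correct and follows the same blueprint as the paper at the layer level: approximate each continuous KA sub-layer $\Phi_i^{q,p}$ by a single-hidden-layer ReLU MLP, broadcast one approximator per $G$-orbit so that the family inherits the FS pattern, and concatenate hidden units to obtain a two-affine-layer parameter-sharing block. The paper makes this last step more explicit than you do: it writes the intermediate hidden representation as living in $\reals^{n^2\times d_{\max}}$ (indexed by $(q,p)$, not just $p$), builds $L_1:\reals^{n\times d_{\text{in}}}\to\reals^{n^2\times d_{\max}}$ and $L_2:\reals^{n^2\times d_{\max}}\to\reals^{n\times d_{\text{out}}}$, and directly verifies each is an equivariant linear map. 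Your ``concatenate the hidden units'' line is right in spirit but glosses over the fact that the hidden units depend on $q$ as well as $p$; spelling out the $(q,p)$-indexed intermediate space and the induced $G$-action there would close that small presentational gap.

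Where you genuinely diverge from the paper is the composition step. The paper dispatches it in one line by invoking Lemma~6 of \citet{lim2022signbasisinvariantnetworks} (\emph{layer-wise universality implies universality}), whereas you carry out the error-propagation argument by hand: inflated compact buffers $K_{i+1}\supset\Phi_i(K_i)$, moduli of uniform continuity, top-down choice of tolerances, and a telescoping estimate. Your route is more self-contained and makes the role of compactness and uniform continuity explicit; the paper's citation is more concise but hides exactly the technical care you identify as ``the main obstacle.'' Either approach is sound.
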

\vspace{-0.5em}
Both proofs provide methods for building the corresponding networks, though not necessarily efficient in terms of depth or parameters. The results extend to higher-order tensors and direct products of symmetry groups via flattened tensor representations.  
% \haggai{do we want to say that these results apply to high-order FS-KANs as well? I think the easy way to see this is by looking at k-order networks as invariant to subgroups of $S_{n^k}$}

\textbf{Implications for the expressivity of FS-KANs.}
We show that for a specific permutation group $G$, FS-KANs and parameter-sharing MLPs can approximate each other arbitrarily well, meaning both model families have equivalent expressive power in the uniform approximation sense. This equivalence has profound implications for both theoretical analysis and practical applications. It allows us to transfer known properties and guarantees between these model classes, creating a bridge between the established theory of parameter-sharing networks and KANs.
For example, if DeepSets is proven universal for permutation-invariant functions, we can establish that the corresponding FS-KAN construction inherits this universality. As a direct implication of the above-mentioned expressive power equivalence, we establish the following corollary,

%\begin{corollary}
%     FS-KANs inherit the approximation properties of 
% stars parameter-sharing networks: (1) they are universal approximators for translation-equivariant functions (as established for CNNs by \citet{yarotsky2022universal}), permutation-equivariant functions over sets (derived from \citet{segol2019universal}). (2) High order FS-KANs are universal approximations for $G$-nvaraint functions for any $G\leq S_n$ (\citet{maron2019universality}) (3) FS-KAN for graphs involving $k$-order tensors have the expressive power of $k$-Invariant Graph Networks \citep{geerts2020expressive} and hence the same expressive power as the $k$-WL  test for graph isomorphism \citet{morris2019weisfeiler}.
%\end{corollary}

\begin{corollary}
FS-KANs with splines activations inherit the approximation properties of parameter-sharing networks:
\begin{enumerate}
    \vspace{-0.3em}
    \item They are universal approximators for translation-equivariant functions (as established for CNNs by \citet{yarotsky2022universal}) and permutation-equivariant functions over sets (derived from \citet{segol2019universal}).
    \vspace{-0.3em}
    \item Higher-order FS-KANs are universal approximators for $G$-invariant functions for any $G \leq S_n$ \citep{maron2019universality}.
        \vspace{-0.3em}
    \item FS-KANs for graphs involving $k$-order tensors have the expressive power of $k$-Invariant Graph Networks \citep{maron2019provably,geerts2020expressive,azizian2020expressive} and hence the same discriminative power as the $k$-WL test \citep{morris2019weisfeiler}. %for graph isomorphism 
        \vspace{-0.3em}

\end{enumerate}
\end{corollary}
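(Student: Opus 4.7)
The plan is to derive all three items as direct consequences of the bidirectional equivalence between FS-KANs and parameter-sharing MLPs established in Propositions~\ref{prop: FS is dense in WS} and~\ref{prop: WS is dense in FS}. The key observation is that these two propositions let us transfer \emph{any} uniform-approximation statement in either direction: upper bounds on FS-KAN expressivity come from \propref{prop: WS is dense in FS}, and lower bounds come from \propref{prop: FS is dense in WS}. So the proof is essentially a compositional argument that plugs known results for parameter-sharing networks into this bridge.

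For items (1) and (2), only one direction is needed. Fix a target continuous $G$-equivariant/invariant function $h$ on a compact domain $\Omega$ and $\epsilon>0$. The cited universality results (\citet{yarotsky2022universal} for CNNs, \citet{segol2019universal} for DeepSets, and \citet{maron2019universality} for $G$-invariant functions via higher-order tensors) guarantee a parameter-sharing MLP $f$ with $\|h-f\|_{\infty,\Omega}<\epsilon$. By \propref{prop: FS is dense in WS} there exists an FS-KAN $\Phi$ with spline activations such that $\Phi(\mathbf{x})=f(\mathbf{x})$ on $\Omega$, so $\|h-\Phi\|_{\infty,\Omega}<\epsilon$. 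I would verify that the sharing structure matches in each case: $C_n$- or translation-type groups for CNNs, $S_n$ for DeepSets, and higher-order $S_n$-tensor sharing (as discussed in \secref{subsec: generalizations of fs layers}) for the Maron et al.\ universality theorem.

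Item (3) is the one requiring both directions and a separate argument to translate uniform approximation into \emph{discriminative} power. The forward direction -- FS-KANs are at least as expressive as $k$-IGNs -- follows immediately from \propref{prop: FS is dense in WS}: any $k$-IGN is a parameter-sharing MLP on $k$-order tensors, hence exactly representable by an FS-KAN. For the converse, suppose an FS-KAN $\Phi$ separates two graphs $\mathbf{x}_1,\mathbf{x}_2$ (viewed as $k$-order tensors in a compact domain), i.e.\ $\delta := |\Phi(\mathbf{x}_1)-\Phi(\mathbf{x}_2)|>0$. Applying \propref{prop: WS is dense in FS} with $\epsilon<\delta/2$ on a compact set containing $\{\mathbf{x}_1,\mathbf{x}_2\}$ yields a parameter-sharing MLP $f$ (a $k$-IGN) with $|f(\mathbf{x}_i)-\Phi(\mathbf{x}_i)|<\delta/2$, so $f(\mathbf{x}_1)\neq f(\mathbf{x}_2)$. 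Hence any pair distinguishable by an FS-KAN is distinguishable by some $k$-IGN, and vice versa; combining with the known equivalence between $k$-IGN and $k$-WL from \citet{maron2019provably, geerts2020expressive, azizian2020expressive} gives the stated equivalence.

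The main obstacle is item (3): carefully handling the passage from \emph{approximation} equivalence to \emph{discriminative} equivalence. In particular, one must argue that a suitable compact domain containing the (finitely or countably many) graph tensors of interest can be chosen so that \propref{prop: WS is dense in FS} applies with a sufficiently small $\epsilon$; this is standard but must be stated explicitly. A secondary care point is matching the sharing schemes in (2) and (3) with the specific higher-order tensor FS structure of \secref{paragraph: higher_order_tensor}, so that the FS-KANs produced by the constructions of the two propositions indeed correspond to the architectures whose expressivity results we wish to inherit.
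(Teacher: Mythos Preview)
Your proposal is correct and follows exactly the approach the paper takes: the corollary is presented there as an immediate consequence of the two-way approximation equivalence in \propref{prop: FS is dense in WS} and \propref{prop: WS is dense in FS}, with no further argument given. Your write-up is in fact more detailed than the paper's own treatment---in particular, your explicit $\delta/2$ argument for item (3), translating approximation equivalence into equality of graph-distinguishing power, fills in a step the paper leaves implicit.
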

% \haggai{do we need to explain this corollary? Perhaps in the appendix}

\vspace{-10pt}
\section{Experiments} \label{section: experiments}
\vspace{-10pt}

In this section, we evaluate the effectiveness of FS-KAN by comparing it to standard parameter-sharing MLPs across a range of tasks involving data with symmetries. Specifically, we aim to answer the following questions: (1) Do FS-KANs inherit the benefits of KANs? (2) How do FS-KANs perform compared to parameter-sharing MLPs and non-equivariant architectures? (3) Do FS-KANs offer advantages in low-data regimes? and (4) What is the trade-off between efficiency and expressivity when using the efficient FS-KAN variant (\secref{subsection: Efficient fs layers})? 

\begin{wrapfigure}[11]{r}{0.54\linewidth}
  \centering
  \vspace{-11pt}
  \begin{subfigure}[b]{0.475\linewidth}
    \centering
    \includegraphics[width=\linewidth]{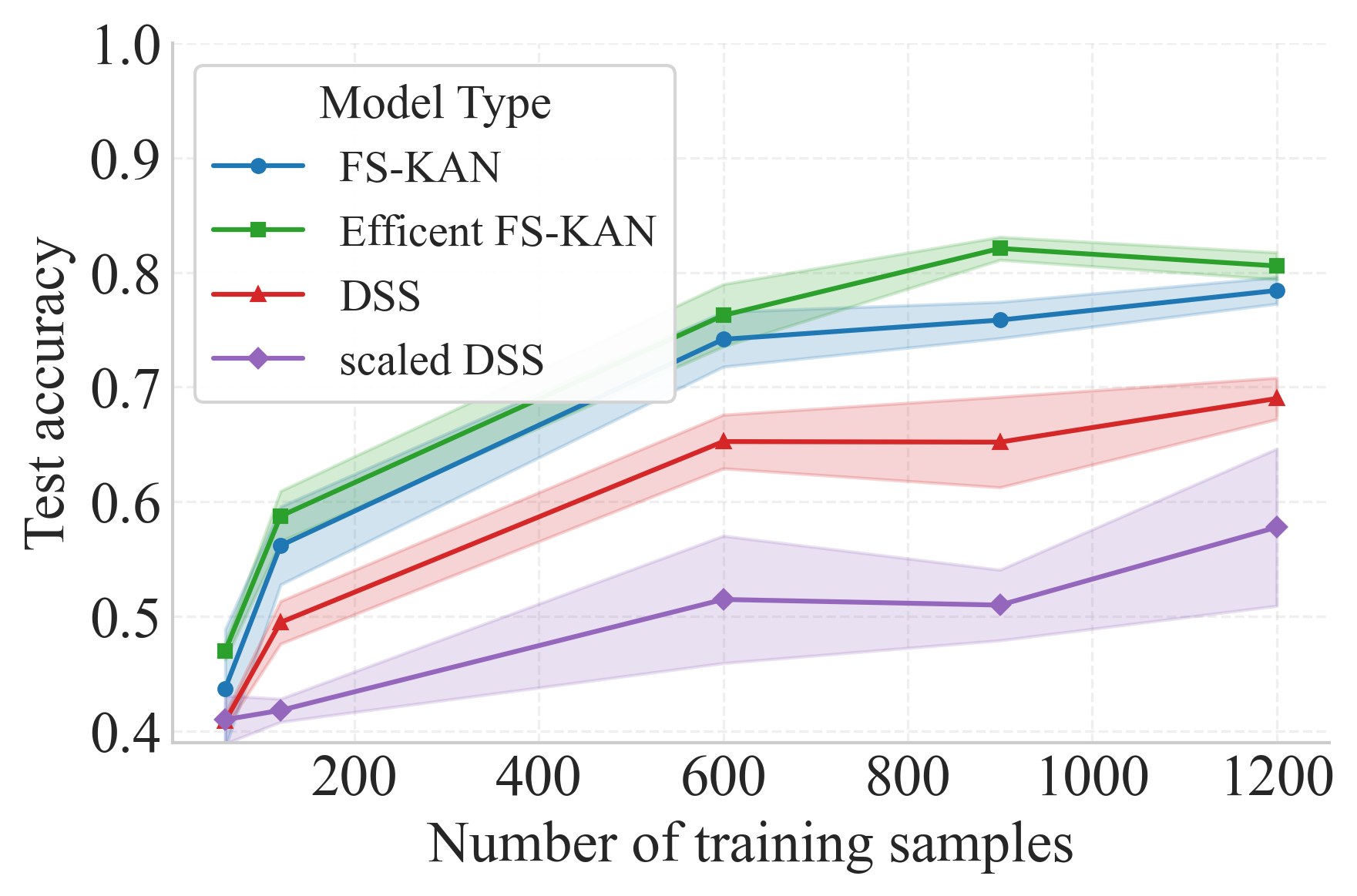}
    \caption{Low data regime}
    \label{subfig:1d signals low data regime}
  \end{subfigure}
  \vspace{0.1em}
  \begin{subfigure}[b]{0.475\linewidth}
    \centering
    \includegraphics[width=\linewidth]{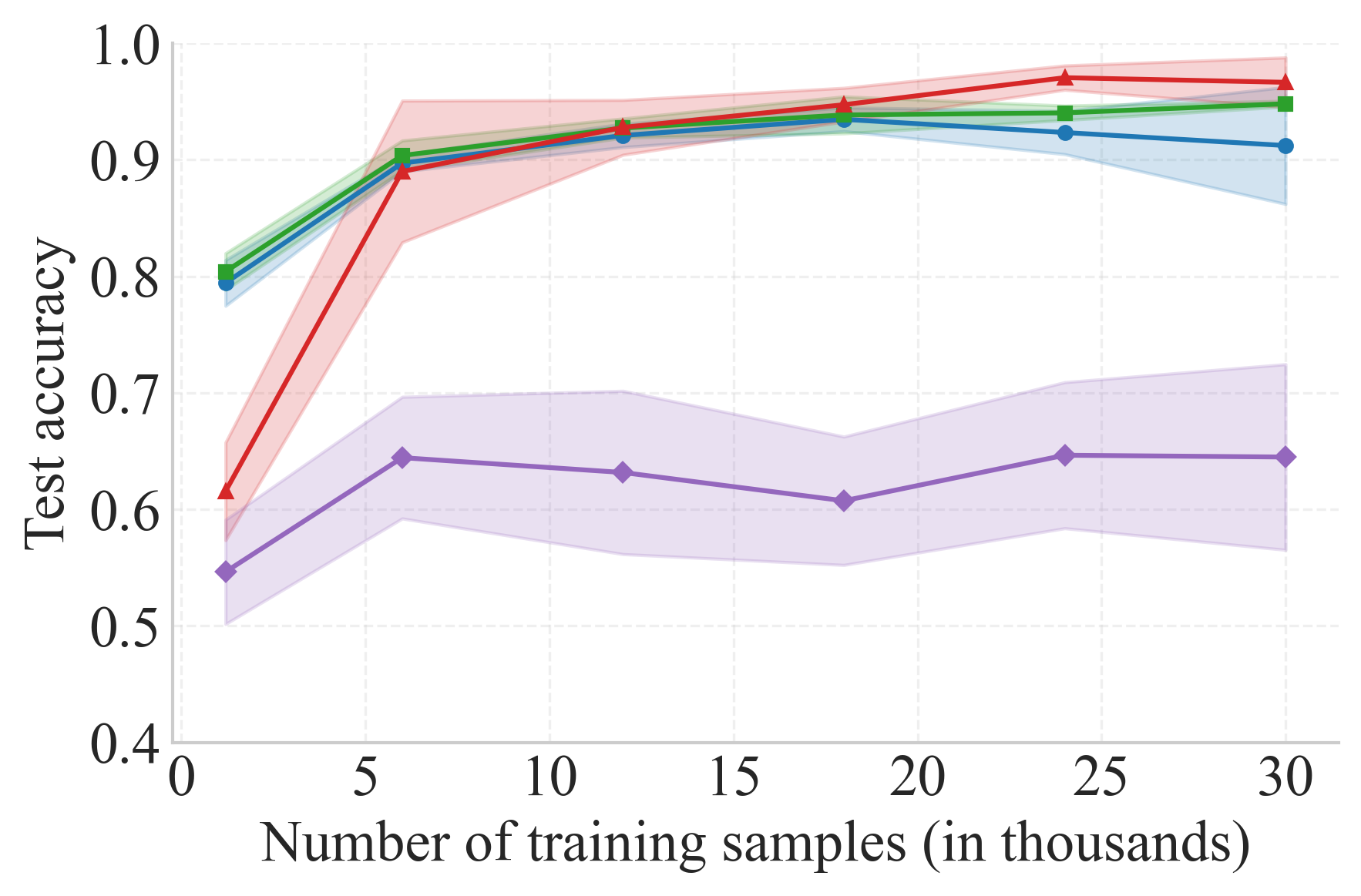}
    \caption{High data regime}
    \label{subfig:1d signals high data regime}
  \end{subfigure}
  \caption{Test accuracy for signal classification with multiple measurements with varying train size.}
  \label{figure:1d signals test accuracy}
\end{wrapfigure}

\textbf{Setup.} We evaluated FS-KANs on diverse invariant and equivariant tasks: signal classification with multiple measurements $(S_n \times C_T)$, point cloud classification $(S_n)$, and semi-supervised rating prediction $(S_n \times S_m)$. These tasks involve structured signals, 3D geometry, and user-item interactions. We compare FS-KANs and efficient FS-KANs against parameter-sharing MLP baselines, matching parameter counts when possible as done in recent KAN literature \citep{liu2024kankolmogorovarnoldnetworks, bodner2024convolutionalkolmogorovarnoldnetworks, alter2024robustnesskolmogorovarnoldnetworksadversarial}, and additionally include transformers and standard KANs for point cloud classification. While GCNNs \citep{cohen2016group} could serve as baselines, they are computationally infeasible for these tasks. Performance is evaluated across varying training set sizes to examine behavior under different data regimes. All experiments use 5 random seeds with mean results and standard deviation error bars. Detailed task descriptions, datasets, and architectures are in \appxref{appendix: Implementation details}, with hyper-parameter study in \appxref{appx: ablation study}.

% \textbf{Learning invariant symbolic formulas}
%KANs are known for their superior interpretability through direct visualization of learned functions. We train both standard KAN and FS-KAN to learn $S_n$ invariant functions on synthetic data, following the methodology in \citet{liu2024kankolmogorovarnoldnetworks}.
%
%\figref{fig:interpretability_comparison} presents the visualization of the trained networks. Due to the function sharing scheme, $S_n$-FS-KAN learns shared spline functions across symmetric edges, making the equivariant structure immediately apparent and significantly simplifying the learned network. In contrast, standard KAN learns independent spline functions for each edge, resulting in a more complex and less interpretable network that fails to respect underlying data symmetries. This demonstrates that FS-KAN not only maintains the interpretability benefits of standard KANs but enhances interpretability by making the equivariant structure explicit.

\textbf{Signal classification with multiple measurements} \label{subsec: signal classifactions}
We follow the signal classification setup from \citet{maron2020learning}, using a synthetic dataset where each sample consists of a set of \(n = 25\) noisy measurements of a periodic 1D signal sampled at $T=100$ time steps. The clean signals belong to one of three classes—sine, sawtooth, or square wave—with varying amplitude, frequency, phase, and offset. The task is to classify the signal type given the noisy measurements. For this experiment, we use $S_n \times C_T$-FS-KAN and its efficient variant, composed of $S_n \times C_T$ equivariant and invariant FS-KA layers and batch-norms, with a total number of $3e4$ parameters. We compare our models to the parameter-sharing based \emph{Deep Sets for Symmetric elements model} (DSS) proposed by the original work, which contains about $3e6$ parameters and scaled DSS with a comparable number of parameters to FS-KAN. \Figref{figure:1d signals test accuracy} shows the classification accuracy across different training set sizes. FS-KAN consistently outperforms both baseline models in the low-data regime (60-1200 examples) while yielding comparable results in higher-data regimes. The efficient variant achieves even higher accuracy than the full FS-KAN model while also reducing training time and memory usage. The training of the efficient variant was $\times1.4$ faster than that of the full FS-KAN. However, the efficient variant was about $4$ times slower than the DSS models (Table \ref{tab:runtime for signals}). % \haggai{refer to timing comparison?} 
%\haggai{add transformer and non invariant models here as well?} \ran{transformer is a bot complicated as it only respect the Sn symmetry in the form. }

\textbf{Point cloud classification} 
We evaluate our models on the task of 3D object classification using the ModelNet40 dataset \citep{wu20153dshapenetsdeeprepresentation}, which contains point cloud representations of objects from 40 categories such as \emph{chair}, \emph{table}, and \emph{airplane}. Each sample consists of a set of $n$ points with $d=3$ spatial coordinates. We evaluated the models' ability to generalize by varying the training set size and also examined how the number of points in each cloud affects their performance. We do not use data augmentations to ensure that the number of training examples reflects the actual dataset size.
We compare $S_n$-FS-KAN and the efficient variant to parameter-sharing-based MLPs, composed of DeepSets \citep{zaheer2017deep} equivariant and invariant layers, as well as Point Transformer \citep{zhao2021point} and a non-invariant KAN baseline. Each invariant model has approximately $5.5\times10^4$ parameters while the non-invariant KAN has varying parameter counts due to different input sizes. As observed in the previous experiment and supporting our hypothesis, FS-KAN outperforms the other models when the data is limited in both the number of samples and the number of points for each object, as depicted in \figref{fig: point cloud results}. Consistent with previous works, the non-invariant KAN exhibits dramatically worse performance, highlighting the importance of designing symmetry-aware architectures. Additionally, we measure the runtime of each model in both training and inference (Table \ref{tab: times for point clouds}), as well as memory usage during both phases (Table \ref{tab:gpu_memory}). While the efficient variant is about $\times 1.5$ faster than the FS-KAN in training with larger $n$, it is still slower than DeepSets.

\begin{figure}[t!]
    \centering
    \includegraphics[width=1.0\linewidth]{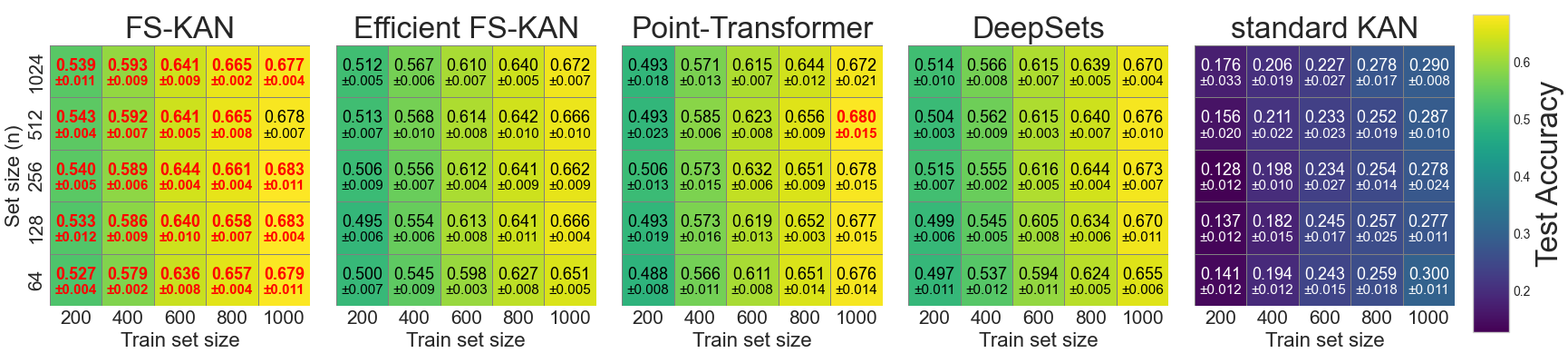}
    \caption{Test accuracy for point clouds classification with different numbers of $n$ points in each point cloud and varying train set size. The highest accuracy is shown in red and bold. Our FS-KAN consistently outperforms DeepSets in all configurations.} 
    \label{fig: point cloud results}
\vspace{-5pt}
\end{figure}

\textbf{Continual learning on point clouds}
In real-world applications, data distributions frequently shift over time, necessitating models that can continuously adapt while preserving previously acquired knowledge \citep{10444954,doi:10.1073/pnas.1611835114}. A critical challenge is \emph{catastrophic forgetting}, where adapting to new distributions degrades performance on previous ones. To evaluate the robustness of our FS-KANs against catastrophic forgetting, we conducted a continual learning experiment following the methodology established in the original KAN paper and \citet{park2024cf}.
Our experimental design comprises two phases: Phase 1 trains on the original ModelNet40 dataset \citep{wu20153dshapenetsdeeprepresentation} for point cloud classification (task A), while Phase 2 trains on a corrupted version featuring random translations and 3D rotations (task B). Further details appear in \appxref{appendix: cl on pc}.
We evaluate performance using two metrics: \emph{Forgetting}, calculated as the difference between accuracy on task A after Phase 1 and after Phase 2 (acc. A1 - acc. A2), and \emph{Average Accuracy}, computed as the mean performance across both tasks after Phase 2. Lower forgetting indicates better retention of previously learned knowledge, while higher average accuracy reflects robust overall performance.
The results in Table \ref{tab:cl on point clouds} show that FS-KAN demonstrates strong performance across training sizes, with clear advantages in low-data regimes where it consistently outperforms baselines.

\begin{table}[t!]
\renewcommand{\arraystretch}{0.96} 
    \scriptsize
    \centering
    \begin{tabular}{|c|l|c|c|c|c|c|}
    \hline
    \textbf{Train Size} & \textbf{Model} & \textbf{acc. A1 $\uparrow$} & \textbf{acc. A2 $\uparrow$} & \textbf{acc. B2 $\uparrow$}   & \textbf{Forgetting $\downarrow$} & \textbf{Avg Accuracy $\uparrow$}\\
    \hline
    200 & FS-KAN & \textbf{0.531 ± 0.007} &  \textbf{0.497 ± 0.014} &  \textbf{0.343 ± 0.014} &  \textbf{0.034 ± 0.018} &  \textbf{0.420 ± 0.013} \\
    200 & Efficient FS-KAN & 0.511 ± 0.011 & 0.470 ± 0.018 & 0.320 ± 0.009 & 0.040 ± 0.010 & 0.395 ± 0.012 \\
    200 & DeepSets & 0.514 ± 0.009 & 0.455 ± 0.005 & 0.304 ± 0.019 & 0.059 ± 0.014 & 0.380 ± 0.009 \\
    
    \hline
    400 & FS-KAN &  \textbf{0.594 ± 0.008} &  \textbf{0.550 ± 0.010} &  \textbf{0.374 ± 0.014} &  \textbf{0.044 ± 0.007} &  \textbf{0.462 ± 0.012} \\
    400 & Efficient FS-KAN & 0.567 ± 0.004 & 0.506 ± 0.019 & 0.345 ± 0.015 & 0.061 ± 0.020 & 0.425 ± 0.013 \\
    400 & DeepSets & 0.562 ± 0.005 & 0.495 ± 0.019 & 0.344 ± 0.018 & 0.067 ± 0.017 & 0.419 ± 0.018 \\
    \hline
    
    600 & FS-KAN & \textbf{0.635 ± 0.005} & \textbf{0.590 ± 0.014} & \textbf{0.411 ± 0.004} & \textbf{0.045 ± 0.013} & \textbf{0.501 ± 0.006} \\
    600 & Efficient FS-KAN & 0.613 ± 0.008 & 0.553 ± 0.014 & 0.384 ± 0.011 & 0.060 ± 0.012 & 0.469 ± 0.012 \\
    600 & DeepSets & 0.614 ± 0.007 & 0.559 ± 0.011 & 0.391 ± 0.006 & 0.055 ± 0.012 & 0.475 ± 0.007 \\
    \hline
    800 & FS-KAN & \textbf{0.661 ± 0.006} & \textbf{0.623 ± 0.013} & \textbf{0.446 ± 0.012} & 0.038 ± 0.010 & \textbf{0.535 ± 0.011} \\
    800 & Efficient FS-KAN & 0.643 ± 0.007 & 0.586 ± 0.020 & 0.407 ± 0.015 & 0.056 ± 0.022 & 0.497 ± 0.016 \\
    800 & DeepSets & 0.641 ± 0.014 & 0.606 ± 0.005 & 0.427 ± 0.008 & \textbf{0.036 ± 0.011} & 0.516 ± 0.006 \\
    \hline
    1000 & FS-KAN & \textbf{0.679 ± 0.007} & 0.640 ± 0.007 & 0.467 ± 0.011 & 0.040 ± 0.006 & 0.553 ± 0.009 \\
    1000 & Efficient FS-KAN & 0.666 ± 0.005 & 0.615 ± 0.003 & 0.435 ± 0.010 & 0.051 ± 0.006 & 0.525 ± 0.005 \\
    1000 & DeepSets & 0.670 ± 0.006 & \textbf{0.643 ± 0.016} & \textbf{0.468 ± 0.017} & \textbf{0.027 ± 0.020} & \textbf{0.555 ± 0.014} \\
    \hline
    \end{tabular}
    \caption{Test accuracies and continual learning metrics on point cloud classification for different models and training set sizes. Best results are in bold.}
\label{tab:cl on point clouds}
\end{table}
\textbf{Semi-supervised rating prediction}
We evaluate FS-KAN in a matrix completion task under extreme data scarcity. Given a partially observed user-item rating matrix, the goal is to predict the missing ratings. This setting exhibits a symmetry under both user and item permutations, which we model using $S_n\times S_m$ equivariant architectures. We run experiments on MovieLens-100K \citep{10.1145/2827872} dataset and the sub-sampled Flixster,
Douban and Yahoo Music presented by \citet{monti2017geometric}. To simulate sparse supervision, we train on varying small fractions of the available training data. $S_n\times S_m$ FS-KAN and its efficient variant are compared to the Self-supervised Exchangeable Model (SSEM) proposed by \citet{hartford2018deep} ($2e6$ parameters), composed of $S_n\times S_m$ equivariant parameter-sharing layers, and scaled SSEM with the same number of parameters as in FS-KAN ($9e4$ parameters). The data preparation procedure is detailed in \appxref{appendix: rating prediction}. 
As shown in \figref{fig: rmse for harford}, FS-KAN constantly achieves higher accuracy than the baselines in the low-data regime, highlighting its strong data efficiency. On larger datasets, the performance gap narrows, and the standard linear layer-based models become preferable due to their shorter training time.

\begin{wrapfigure}[22]{r}{0.54\textwidth}
   \centering
   \vspace{-8pt}
   \begin{subfigure}[b]{0.265\textwidth}
       \centering
       \includegraphics[width=\textwidth]{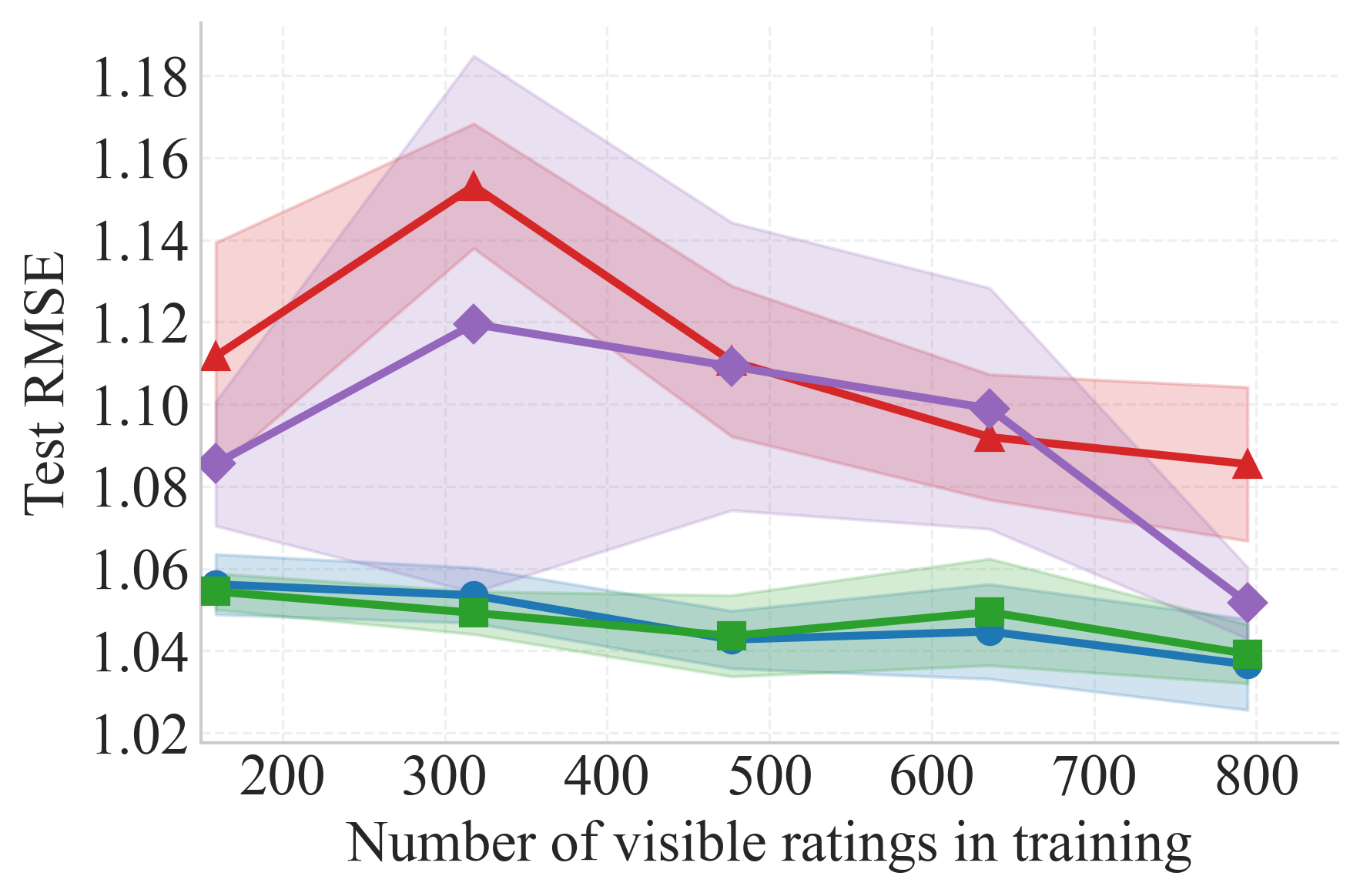}
       \caption{Yahoo-Music}
       \label{subfig:yahoo}
   \end{subfigure}
   \begin{subfigure}[b]{0.265\textwidth}
       \centering
       \includegraphics[width=\textwidth]{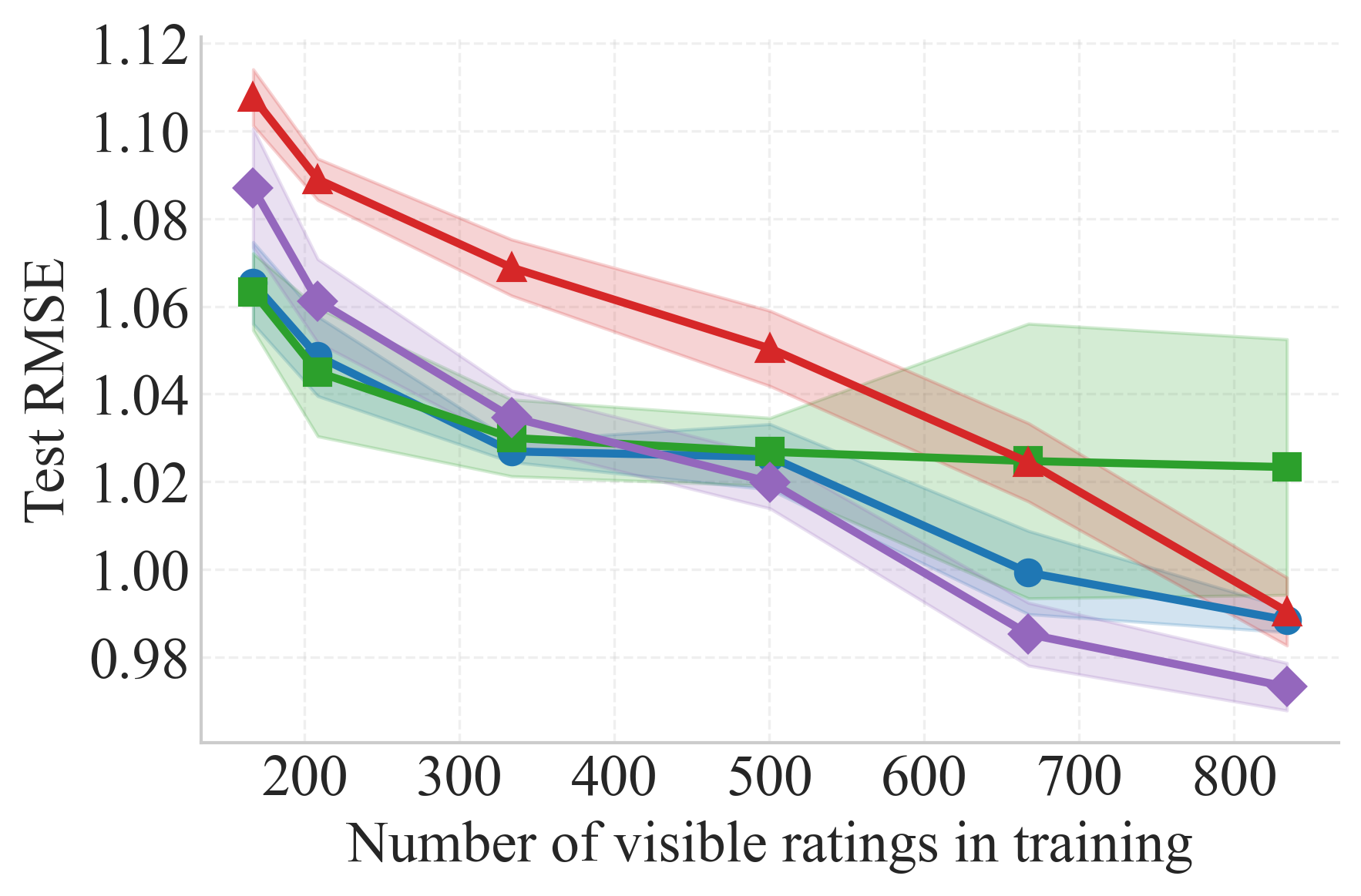}
       \caption{Flixster}
       \label{subfig:flixster}
   \end{subfigure}
   
   \vspace{0.1em}
   
   \begin{subfigure}[b]{0.265\textwidth}
       \centering
       \includegraphics[width=\textwidth]{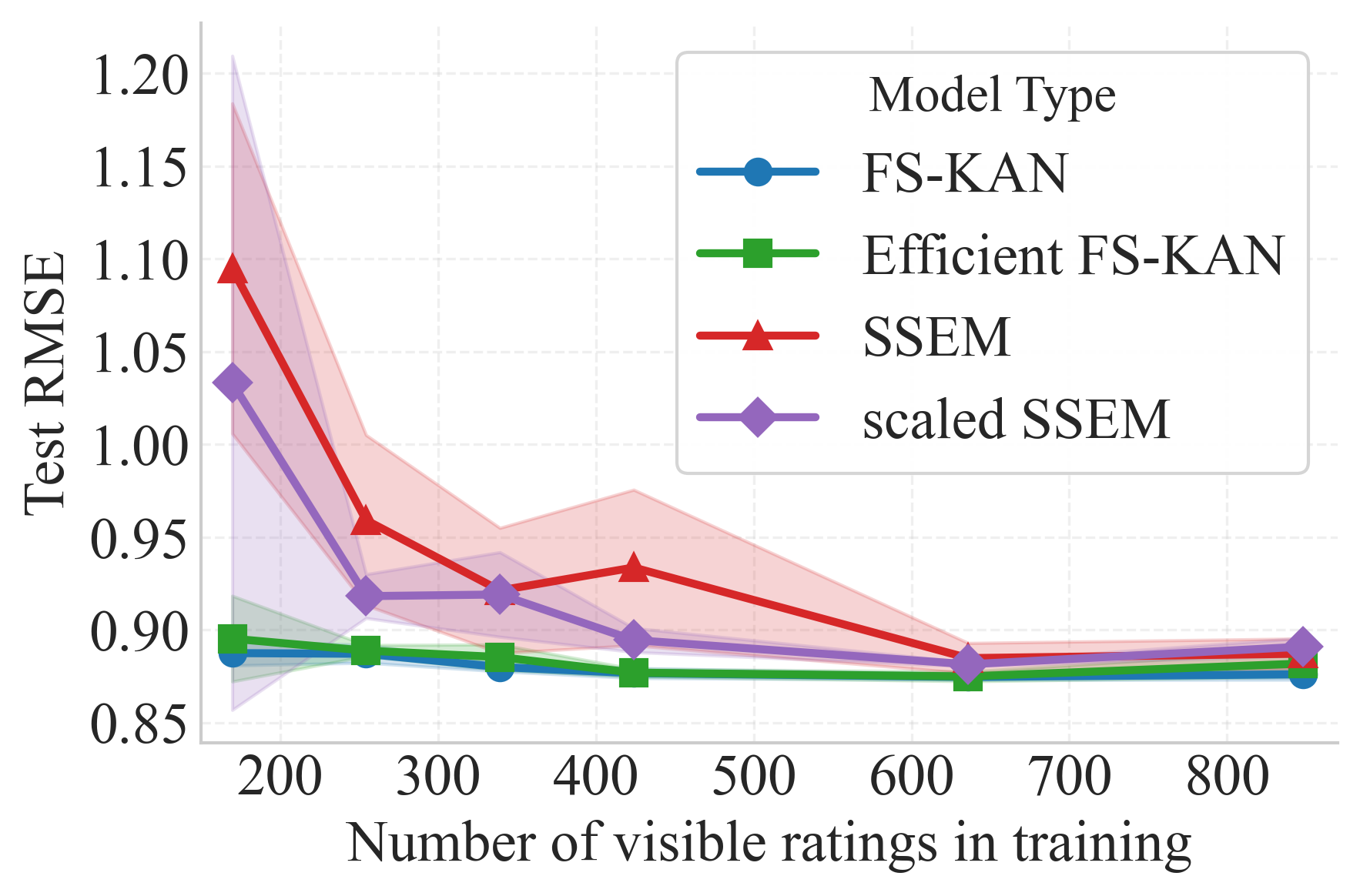}
       \caption{Douban}
       \label{subfig:douban}
   \end{subfigure}
   \begin{subfigure}[b]{0.265\textwidth}
       \centering
       \includegraphics[width=\textwidth]{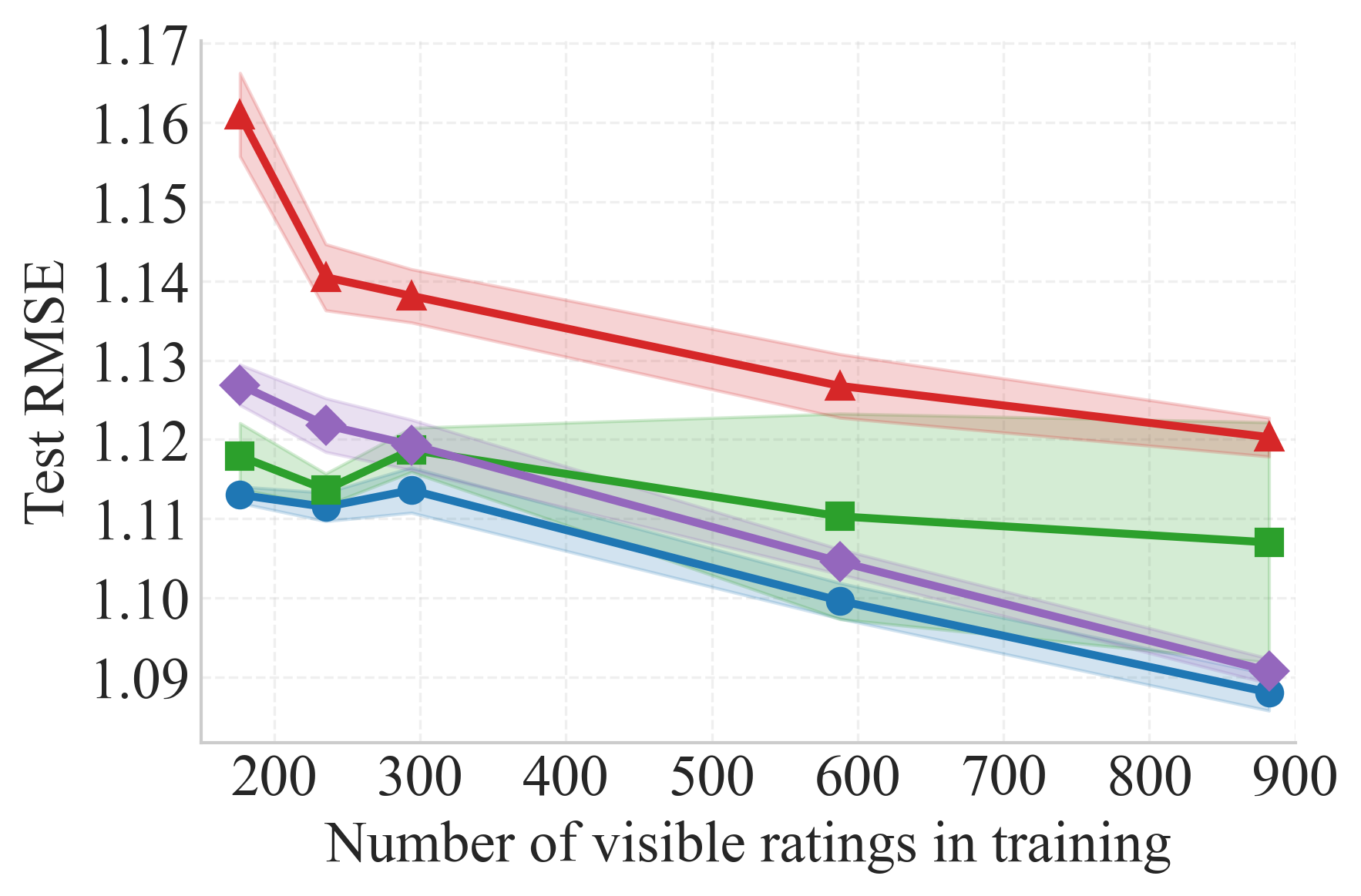}
       \caption{MovieLens-100K}
       \label{subfig:ml1k}
   \end{subfigure}
   
   \caption{Test RMSE comparison across different recommendation datasets. Our FS-KAN models outperform baseline approaches in the low data regime, demonstrating better data efficiency and generalization capability.}
   \label{fig: rmse for harford}
\end{wrapfigure}

% \haggai{shall we also mention the results with a lot of data to give the full picture?}

% \fab{I would add some run-time comparison here. I think we failed to empirically show the runtime gap between standard and relaxed FS-KAN}

% \fab{Wondering if we can say anything at all here that connects to the expressiveness section. Shall we peek into training performance? Perhaps we can compare standard and relaxed FS-KANs? Just to gather some intuition and propose a conjecture on the power of the latter}

\textbf{Discussion} 
Our experiments show that FS-KANs outperform parameter-sharing MLPs in low-data regimes while using fewer parameters, consistent with recent KAN findings \citep{liu2024kankolmogorovarnoldnetworks,kiamari2024gkangraphkolmogorovarnoldnetworks}. Additionally, our experiments show that our FS-KAN inherits the benefits of KAN, such as interpretability (\figref{fig:interpretability_comparison}, \ref{fig: interperabilty}) and adaptability, while maintaining the benefit of incorporating symmetry.
%
%However, as dataset sizes increase, parameter-sharing MLPs eventually match FS-KAN performance while maintaining faster runtime. 
The efficient FS-KAN variant offers improved speed but remains slower than MLP baselines. These results suggest FS-KANs are most valuable for learning tasks on data with symmetries when data is limited, where their superior sample efficiency justifies the computational overhead. %  \haggai{say how much slower?} \ran{Changes etween experiences}% Selecting between these architectural approaches should be done carefully while considering the specific application requirement.
\vspace{-1em}
\section{Conclusion} \label{section: conclusion}%\vspace{-0.5em}
% \haggai{first say what we saw in the paper and the good things}
\vspace{-1em}
In this paper, we presented a novel approach to permutation-equivariant learning through KAN-based architectures. Our theoretical framework unifies several recent architectures in this domain while providing new deep insights into their expressivity and offering a blueprint for designing invariant and equivariant KAN-based networks for numerous other symmetry groups currently lacking in the literature.
While our proposed method demonstrates improved performance on tasks with limited data, it has several limitations. For example, the computational cost can be significant even with the efficient variant. This becomes especially apparent in the high-data regime. Hence, an important avenue for future work is to come up with faster implementations of FS-KANs. %Our current formulation of equivariant layers operates solely on the feature channel dimension, keeping the structural dimensions (e.g., set size) fixed. Exploring pooling mechanisms that respect symmetry (as max-pooling layers in CNNs)  or layer implementations for handling sparse data could reduce complexity in this regard.
%
%Our current formulation of equivariant layers operates solely on the feature channel dimension, keeping the structural dimensions (e.g., set size) fixed. Exploring pooling mechanisms that respect symmetry (as max-pooling layers in CNNs) could reduce complexity in this regard.
%
%Moreover, in domains such as recommendation systems, the input data is often sparse. However, our current implementation operates on \emph{dense} matrices, which is inefficient in such settings. Developing implementations for sparse data is an important direction for future work.
%\r
Finally, while our theoretical results focused mainly on expressivity,  exploring properties of FS-KANs, such as generalization power, optimization issues, and scalability, is an important future research area.% This research, like other studies in this field, may carry potential social implications. % many models rely on non-linear components such as attention mechanisms \citep{vaswani2023attentionneed}. Integrating such operations into FS-KANs is a promising direction for future research.
%\haggai{future work directions?}
% \haggai{limitations: runtime, memory cost not the best in high data regime, not applicable to general groups} \ran{ models for set size for deep-set, graphs, etc} \ran{we only discuss linear layers where there are non-linear such as attention, poolings}

\section{Acknowledgments}
HM is supported by the Israel Science Foundation through a personal grant (ISF 264/23) and an equipment grant (ISF 532/23), and by the Career Advancement Chairs in Artificial Intelligence – Schmidt Futures. G.B. is supported by the Jacobs Qualcomm PhD Fellowship. FF conducted this work supported by an Aly Kaufman Post-Doctoral Fellowship. 

\bibliography{bib}
\bibliographystyle{iclr2026_conference}

\appendix

\section{Generalizations of FS-KA layers}
\subsection{Generalization to direct product symmetries} \label{appendix: direct product} There are many cases where the symmetry of the data can be described using the direct product of groups $G\times H$. For example, in the case of a rating matrix $X\in \reals^{n\times m}$ permutation $\sigma$ of the users and a permutation $\tau$ of the items will lead to corresponding permutations of rows and columns of the rating matrix. In general, for a $G \leq S_n , H \leq S_m $ the group $ G \times H $  acts on $X \in \mathbb{R}^{n \times m \times d} $ via:
\begin{equation}
    ((\sigma, \tau) \cdot X)_{i,j} = X_{\sigma^{-1}(i), \tau^{-1}(j)}, \quad (\sigma, \tau) \in G \times H
\end{equation}\\
The standard KA layer in this case can be formulated as:\begin{equation}
    \Phi(\mathbf{x})_{q} = \sum_{p=1}^n \Phi^{q,p}(\mathbf{x}_p)
\end{equation}
Where each $\Phi^{q,p}$ is KA sub-layer from  $\reals^{m \times d_\text{in}}$ to $\reals^{m \times d_\text{out}}$.
As stated,  $G\times H$ FS-KA layers can be constructed to be equivariant using  $H$ FS-KA sub-layers as follows:
\begin{prop} \label{prop: fs for direct product}
    Let $\Phi: \reals^{n\times m \times d_\text{in}} \to \reals^{n\times m \times d_\text{out}}$ be $G \times H$ equivariant FS-KA layer. Then exists a KA layer $\hat{\Phi}$ composed of \textbf{${H}$-equivariant FS-KA sub-layers} $\{ \hat{\Phi}^{q,p} \}_{q,p \in [n]}$ with $d_\text{in}, d_\text{out}$ that satisfy:\begin{equation} \label{eq: shring in direct product}
        \hat{\Phi}^{q,p} = \hat{\Phi}^{\sigma(q),\sigma(p)}, \quad \forall \sigma \in G
    \end{equation}
    and,
    \begin{equation}
        \hat{\Phi}_q(x) = \Phi(\mathbf{x})_q,\quad\forall \mathbf{x}
    \end{equation}
\end{prop}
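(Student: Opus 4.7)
The plan is to reduce the direct-product case to the single-group case already handled by \propref{prop: multiple channels FS-KA layers} by flattening the two spatial axes into one and exploiting the diagonal action of $G\times H$ on flattened indices.

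First, I would view $\Phi$ as a KA layer $\bar\Phi:\reals^{nm\times d_\text{in}}\to\reals^{nm\times d_\text{out}}$ by flattening the $n\times m$ grid into a single index set $[n]\times[m]$. The product action of $G\times H$ on $\reals^{n\times m\times d_\text{in}}$ corresponds under this flattening to the permutation action on $[n]\times[m]$ given by $(\sigma,\tau)\cdot(p,i)=(\sigma(p),\tau(i))$. Hence $\bar\Phi$ is a $(G\times H)$-equivariant KA layer with $d_\text{in},d_\text{out}$ feature channels in the sense of \eqref{eq: decompositon of fs for channels}.

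Next, I apply \propref{prop: multiple channels FS-KA layers} to $\bar\Phi$ for the group $G\times H$. This produces an equivalent KA layer $\tilde\Phi$ whose KA sub-layers $\tilde\Phi^{(q,j),(p,i)}:\reals^{d_\text{in}}\to\reals^{d_\text{out}}$ satisfy the FS condition
\begin{equation*}
\tilde\Phi^{(q,j),(p,i)} \;=\; \tilde\Phi^{(\sigma(q),\tau(j)),(\sigma(p),\tau(i))}, \quad \forall(\sigma,\tau)\in G\times H,
\end{equation*}
and $\tilde\Phi(\mathbf{x})=\bar\Phi(\mathbf{x})$ for every $\mathbf{x}$.

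I then define $\hat\Phi^{q,p}:\reals^{m\times d_\text{in}}\to\reals^{m\times d_\text{out}}$ by collecting the sub-sub-layers with fixed outer indices $(q,p)$: its $(j,i)$ entry is $\tilde\Phi^{(q,j),(p,i)}$. By construction, $\hat\Phi(\mathbf{x})_q=\sum_{p=1}^{n}\hat\Phi^{q,p}(\mathbf{x}_p)$ agrees with $\tilde\Phi(\mathbf{x})$, and hence with $\Phi(\mathbf{x})$, which gives the function equivalence claim.

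It remains to split the joint FS condition into its $G$ and $H$ components. Specializing the identity above to $\tau=e$ yields $\tilde\Phi^{(q,j),(p,i)}=\tilde\Phi^{(\sigma(q),j),(\sigma(p),i)}$ for every $\sigma\in G$, which is exactly the external sharing $\hat\Phi^{q,p}=\hat\Phi^{\sigma(q),\sigma(p)}$ of \eqref{eq: shring in direct product}. Specializing to $\sigma=e$ yields $\tilde\Phi^{(q,j),(p,i)}=\tilde\Phi^{(q,\tau(j)),(p,\tau(i))}$ for every $\tau\in H$, which is the FS condition making each $\hat\Phi^{q,p}$ an $H$-equivariant FS-KA sub-layer in the sense of \propref{prop: multiple channels FS-KA layers}. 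The main obstacle I anticipate is purely notational: being careful that the sub-layer indexing used in \propref{prop: multiple channels FS-KA layers} is compatible with the regrouping $(q,j),(p,i)\mapsto(q,p)$ (outer) and $(j,i)$ (inner), and that $H$-equivariance of $\hat\Phi^{q,p}$ as a KA layer on $\reals^{m\times d_\text{in}}$ indeed corresponds to the FS relation on its scalar components; the mathematical content is a direct consequence of the single-group result applied to the product action.
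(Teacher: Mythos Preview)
Your proposal is correct and follows essentially the same approach as the paper's proof: both flatten the $n\times m$ grid into a single index set, invoke \propref{prop: multiple channels FS-KA layers} for the product group $G\times H$ on the flattened layer, then regroup the resulting FS sub-layers by outer indices $(q,p)$ and specialize the joint FS identity to $\tau=e_H$ and $\sigma=e_G$ to obtain external and internal sharing respectively. The only difference is cosmetic---the paper introduces an explicit bijection $P:[n]\times[m]\to[nm]$ while you work directly with pair indices---but the argument is the same.
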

The proof is on \appxref{appendix: proof of fs for direct product}.

\subsection{Higher-order tensors} \label{appendix: high_order_tensors}
As stated in \secref{subsec: generalizations of fs layers},  the equivariant FS-KA layer for higher-order tensors satisfies:
\begin{equation}
\Phi^{\mathbf{q}, \mathbf{p}} = \Phi^{\sigma(\mathbf{q}), \sigma(\mathbf{p})}, \quad \forall \sigma \in G .
\end{equation}
FS-KA layers of this form can represent any other equivariant KA layer. The proof is similar to the first-order case, except it applies to $k$ and $k'$-tuples of indices.
When the data involves mixed-order tensors, we can naturally use a superposition of FS-KA layers:
\begin{equation}
\Phi(\mathbf{x})_{k'} = \sum_{k} \Phi^{k', k}(\mathbf{x_{k}}), \quad \mathbf{q} \in [n]^{k'} , 
\end{equation}
where $\mathbf{x_{k}}$ is the k-order input tensor, $\Phi(\mathbf{x})_{k'}$ is a $k'$-order output tensor, and $\Phi^{k', k}$ is an FS-KA layer mapping from $k$ to $k'$-order tensors.

\subsection{Efficient FS-KA layers} \label{appendix: examples of relaxed kans}
In this section we will explain how to derive the efficient equivariant FS-KA layer for arbitrary group $G\leq S_n$. This derivation can be readily extended to invariant layers or to layers operating on higher-order tensors.
In general, we can write the equivariant FS-KA layer output as follows (using Lemma \ref{lemma: sum over indicator}):\begin{equation} 
    \Phi(\mathbf{x})_q = \sum_{p=1}^{n} \sum_{O_h \in [[n]^2/G]} \Phi^{(q,p)}(\mathbf{x}_{p}) \mathbb{I} \left\{(q,p)\in O_h \right\} = \sum_{p=1}^{n} \sum_{O_h \in [[n]^2/G]} \Phi^{(h)}(\mathbf{x}_{p}) \mathbb{I} \left\{(q,p)\in O_h \right\} ,
\end{equation}
where $[[n]^2/G]$ is the orbits of $[n]^2$ under the group action of $G$ and $\mathbb{I}$ is the indicator function, and $\Phi^{(h)}$ is a KA sub-layer shared across orbit $O_h$. Our \emph{Efficient FS-KA} layer  performs \emph{sum-pooling} before applying \emph{KA sub-layer}, i.e : \begin{equation} \label{eq: efficent fs for general g}
    \tilde{\Phi}(\mathbf{x)}_q = \sum_{O_h \in [[n]^2/G]} \Phi^{(h)}\left( \alpha_h \mathbf{x}_{q} + \sum_{p=1}^n \mathbf{x}_{p} \mathbb{I}  \left\{(q,p)\in O_h \right\} \right) ,
\end{equation}
where $\alpha_h \in \{0,1\}$ is for flexibility in the purpose of reusing computations (see the following examples).\\
While this relaxation does not have the expressivity guarantees of the FS-KA layer, it is still equivariant and can reduce the number of computations and memory costs.

\textbf{Equivariance.} We will show that the efficient variant is still $G$-equivariant. Let $\sigma \in G$, then:\begin{equation}
\begin{aligned}
    \tilde{\Phi}(\mathbf{\sigma \cdot x)}_q  &= \sum_{O_h \in [[n]^2/G]} \Phi^{(h)}\left( \alpha_h \mathbf{x}_{\sigma^{-1}(q)} + \sum_{p=1}^n \mathbf{x}_{\sigma^{-1}(p)} \mathbb{I}  \left\{(q,p)\in O_h \right\} \right) \\
    & = \sum_{O_h \in [[n]^2/G]} \Phi^{(h)}\left( \alpha_h \mathbf{x}_{\sigma^{-1}(q)} + \sum_{p=1}^n \mathbf{x}_{p} \mathbb{I}  \left\{(q,\sigma(p))\in O_h \right\} \right) . \\
\end{aligned}
\end{equation}
By the definition of the orbits, $(q,p) \in O_h$ iff $(\sigma(q),\sigma(p)) \in O_h$ . Therefore,\begin{equation}
    \begin{aligned}
        \tilde{\Phi}(\mathbf{\sigma \cdot x)}_q &= \sum_{O_h \in [[n]^2/G]} \Phi^{(h)}\left( \alpha_h \mathbf{x}_{\sigma^{-1}(q)} + \sum_{p=1}^n \mathbf{x}_{p} \mathbb{I}  \left\{(\sigma^{-1}(q),p)\in O_h \right\} \right) \\
        & =  \tilde{\Phi}(\mathbf{x)}_{\sigma^{-1}(q)} \\
        & =  (\sigma \cdot \tilde{\Phi}(\mathbf{x)})_q .
    \end{aligned}
\end{equation}

\textbf{Complexity analysis.} If the same orbit indicator is not zero for pairs $(q, p_1),(q, p_2)$, they must lie in the same orbit under the group action. Then, there must exist a permutation in $G$ that maps $p_1$ to $p_2$ while leaving $q$ unchanged. In other words, the relevant orbits of $[n]^2$ are determined by the action of the stabilizer subgroup $\text{Stab}_G(q)$, the set of elements in $G$ that fix $q$. Thus, for each $q$, the number of times $\Phi^{(h)}$ is applied in the efficient layer corresponds to the number of equivalence classes in $[n]/\text{Stab}_G(q)$.

As a result, the total number of function evaluations required by the Efficient FS-KA layer is at most:
\begin{equation}
    d_{\text{in}} d_{\text{out}} \sum_{q=1}^{n} \left| [n] / \text{Stab}_G(q) \right| \leq n^2 d_{\text{in}} d_{\text{out}} .
\end{equation}
However, this is only an upper bound, as some computations can be reused. We will show how efficient this variant is in some examples. % Similarly, the number of equivalence classes in $[n]/\text{Stab}_G(p)$ for $p\in[n]$ is precisely how many different functions are applied on the same input $\mathbf{x_p}$ in the FS-KA layer and therefore the 
\paragraph{Examples.}
Consider $G=S_n\times S_m$. In this case, we have four orbits, and the FS-KA layer can be expressed as \begin{equation}
    \Phi(\mathbf{x})_{i,j} = \Phi_1 (\mathbf{x}_{i,j})  + \sum_{k\neq j} \Phi_{2}(\mathbf{x}_{i,k}) +  \sum_{l\neq i} \Phi_{3}(\mathbf{x}_{l,j}) + \sum_{l\neq i, k\neq j} \Phi_{4}(\mathbf{x}_{l,k}) .
\end{equation}
For $\alpha_2 = \alpha_3 = \alpha_4 = 1$ the efficient FS-KA layer has the form of:\begin{equation}
    \tilde{\Phi}(\mathbf{x})_{i,j} = \tilde{\Phi}_1 (\mathbf{x}_{i,j})  +\tilde{\Phi}_{2}\left( \sum_{k=1}^{m} \mathbf{x}_{i,k}\right) +  \tilde{\Phi}_{3}\left( \sum_{l=1}^n \mathbf{x}_{l,j}\right) +  \tilde{\Phi}_{4}\left( \sum_{l=1}^{n}\sum_{k=1}^{m}\mathbf{x}_{l,k}\right) . 
\end{equation}
\\
As the second and third term can be broadcast across the rows and columns respectively, and the fourth term is shared across all the layer, the layer performs only $(n+3)d_\text{in}d_\text{out}$ complex computations.

We would also examine the case of $G=S_n\times C_T$. In this case, we have $2T$ orbits, and the layer can be expressed as \begin{equation}
    \Phi(\mathbf{x})_{i,t} = \sum_{\tau=1}^T \Phi^{(1,\tau)}(\mathbf{x}_{i,t-\tau})  +\sum_{\tau=1}^T \sum_{k\neq i}  \Phi^{(2,\tau)}(\mathbf{x}_{k,t-\tau}), 
\end{equation}
where $\Phi^{(1,\tau)}, \Phi^{(2,\tau)}: \reals^{d_\text{in}} \to \reals ^{d_\text{out}}$ are shared KA sub-layers. We can reduce the number of distinct computations in the layer by a factor of $\frac{2n}{n+1}$ by, \begin{equation}
      \tilde{\Phi}(\mathbf{x})_{i,t} = \sum_{\tau=1}^T  \tilde{\Phi}^{(1,\tau)}(\mathbf{x}_{i,t-\tau})  +\sum_{\tau=1}^T   \tilde{\Phi}^{(2,\tau)}\left( \sum_{k=i}^n \mathbf{x}_{k,t-\tau}\right) .
\end{equation}
In other words, the layer applies KA-convolution to each element and KA-convolution to the sum of all the elements in the set.

\section{Proofs} \label{appendix: proof section}

\subsection{Key supporting lemmas} 
We state and prove two lemmas that will be used throughout the proofs of several of our claims.
\begin{lemma} \label{lemma: sum of constants}
    Let $f_1, \dots, f_n:\reals^{d_1} \to \reals^{d_2}$ be a set of functions such that for any $x_1, \dots , x_n \in \reals^{d_1}$:
    \begin{equation} \label{eq: sum of functions is zero}
        \sum_{i=1}^n f_i(x_i) = 0 .
     \end{equation}
Then $f_1,\dots f_n$ are constant functions. 
\end{lemma}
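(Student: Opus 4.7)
The plan is to show each $f_j$ is constant by freezing the other arguments and reading off the value of $f_j$ from the constraint. First I would fix an arbitrary reference point $x_1^0, \dots, x_n^0 \in \reals^{d_1}$, which by \eqref{eq: sum of functions is zero} satisfies $\sum_{i=1}^n f_i(x_i^0) = 0$.

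Next, for a fixed index $j \in [n]$, I would apply \eqref{eq: sum of functions is zero} to the tuple obtained by keeping $x_i = x_i^0$ for all $i \neq j$ and letting $x_j$ be arbitrary. This yields
\begin{equation}
    f_j(x_j) + \sum_{i \neq j} f_i(x_i^0) = 0,
\end{equation}
which rearranges to
\begin{equation}
    f_j(x_j) = -\sum_{i \neq j} f_i(x_i^0).
\end{equation}
The right-hand side does not depend on $x_j$, so $f_j$ is constant on $\reals^{d_1}$. Since $j$ was arbitrary, every $f_i$ is constant.

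Honestly there is no hard step here; the only thing to be a little careful about is that the hypothesis must be read as holding for all independently chosen tuples $(x_1, \dots, x_n)$, so that one is allowed to vary a single coordinate while freezing the rest. Once this is granted, the one-line substitution above finishes the argument, and as a sanity check the constant values $c_i := f_i(x_i^0)$ must then satisfy $\sum_i c_i = 0$, consistent with the original identity.
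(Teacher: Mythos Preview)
Your proof is correct and essentially identical to the paper's: both fix all coordinates except one and use the hypothesis to conclude that $f_j$ cannot depend on its argument. The paper phrases it by subtracting the identity at two tuples that agree off index $j$, whereas you anchor to a single reference tuple and solve for $f_j(x_j)$, but this is the same one-line substitution.
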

\begin{proof}
Without loss of generality, we will show for $j\in[n]$ that $f_j$ is constant. Let $x_1, \dots , x_n \in \reals^{d_1}$ and $y_1,\dots, y_n \in \reals^{d_1}$ such that:\begin{equation}
    x_i = y_i, \forall i \neq j .
\end{equation}
By \eqref{eq: sum of functions is zero},\begin{equation}
    \sum_{i=1}^n f_i(x_i) - \sum_{i=1}^n f_i(y_i) =0 .
\end{equation}
Using the condition on $x,y$ results in:\begin{equation}
    \begin{aligned}
        & 0 = \sum_{i=1}^n f_i(x_i) - \sum_{i=1}^n f_i(y_i) =  \sum_{i \neq j}f_i(x_i) -  \sum_{i \neq j} f_i(y_i)  + f_j(x_j) - f_j(y_j) \\
        & = \sum_{i \neq j}f_i(x_i) -  \sum_{i \neq j} f_i(x_i)  + f_j(x_j) - f_j(y_j) \\
        & = f_j(x_j) - f_j(y_j) .
    \end{aligned} 
\end{equation}
Therefore:\begin{equation}
     f_j(x_j) = f_j(y_j),\quad \forall x_j, y_j \in \reals .
\end{equation}
Meaning $f_j$ is constant.
\end{proof}

\begin{lemma} \label{lemma: sum over indicator}
    Let $G\leq S_n$ acting on $[n]^2$ and $O_1, \dots, O_H$ be the orbits of $[n]^2$ under the group action of $G$. Then for any $A\in \reals^{n\times n \times d}$ :\begin{equation}
         \sum_{j=1}^n A_{i,j} =  \sum_{j=1}^n\sum_{h=1}^H A_{i,j} \mathbb{I}\{(i,j) \in O_h\}, \\
    \end{equation}
where $\mathbb{I}$ is the indicator function.
\end{lemma}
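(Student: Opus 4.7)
The plan is to observe that the orbits of a group action always form a partition of the underlying set, so the claim reduces to recognizing that the inner sum over $h$ collapses to $1$ for every pair $(i,j)$.

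First I would recall the standard fact that orbits under a group action define an equivalence relation: the relation "$x \sim y$ iff there exists $\sigma \in G$ with $\sigma \cdot x = y$" is reflexive (identity), symmetric (inverses), and transitive (composition). Consequently, $\{O_1, \dots, O_H\}$ partitions $[n]^2$, and for every $(i,j) \in [n]^2$ there is exactly one index $h^\star(i,j) \in [H]$ with $(i,j) \in O_{h^\star(i,j)}$. This immediately yields
\begin{equation}
\sum_{h=1}^H \mathbb{I}\{(i,j) \in O_h\} = 1, \qquad \forall (i,j) \in [n]^2.
\end{equation}

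Next I would swap the order of summation on the right-hand side of the claim and pull the tensor entry out of the $h$-sum (it does not depend on $h$):
\begin{equation}
\sum_{j=1}^n \sum_{h=1}^H A_{i,j}\, \mathbb{I}\{(i,j)\in O_h\} = \sum_{j=1}^n A_{i,j} \sum_{h=1}^H \mathbb{I}\{(i,j)\in O_h\} = \sum_{j=1}^n A_{i,j},
\end{equation}
which is the claimed identity. Since $A_{i,j} \in \reals^d$, the manipulation is componentwise and no further care is required.

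There is essentially no technical obstacle here; the only subtlety worth flagging is that the argument relies on the orbits being disjoint (so that the indicator sum equals $1$ rather than merely being nonzero), which follows from the equivalence-relation structure of orbits. This lemma is purely a bookkeeping identity that lets us rewrite a single sum over $j$ as a double sum indexed by orbits, which is exactly the form needed in Appendix~\ref{appendix: examples of relaxed kans} to reorganize the FS-KA layer computation orbit-by-orbit.
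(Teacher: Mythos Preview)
Your proof is correct and follows essentially the same approach as the paper: both observe that each pair $(i,j)$ belongs to exactly one orbit, so the inner sum over $h$ collapses, recovering $A_{i,j}$. Your version simply spells out the partition-from-equivalence-relation reasoning that the paper leaves implicit.
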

\begin{proof}
    Each $(i,j)$ belongs to exactly one orbit, therefore:\begin{equation}
        \sum_{h=1}^H A_{i,j} \mathbb{I}\{(i,j) \in O_h\}  = A_{i,j} ,
    \end{equation}
which concludes the proof.
\end{proof}
\subsection{Proof for \propref{prop: equivariant fs layer}} \label{appendix: proof of equivariant fs layer}
\begin{proof} \label{proof: equivariant fs layer }
We need to prove $\Phi$ is equivariant, i.e:\begin{equation}
    \sigma \circ \Phi = \Phi \circ \sigma, ~\forall \sigma\in G .
\end{equation}
For convenience, we will show the equivalent condition:\begin{equation}
     \Phi = \sigma^{-1} \circ \Phi \circ \sigma, ~\forall \sigma\in G .
\end{equation}
Let $\mathbf{x}\in \reals^n$ be an input vector. By definition, the output of the layer is given by:\begin{equation}
    \Phi(\mathbf{x})_q = \sum_{p=1}^n \phi_{q,p}(x_p)  , \forall q\in [n] .
\end{equation}
Applying permutation $\sigma \in G$ in the input space:\begin{equation} \label{eq: phi of gx}
    \Phi(\sigma  \cdot \mathbf{x})_q = \sum_{p=1}^n \phi_{q,p}(x_{\sigma^{-1}(p)}) =  \sum_{p=1}^n \phi_{q,\sigma(p)}(x_{p}) ,
\end{equation}
where the last transition is valid since $\sigma$ is a bijection, and we change the order of the summation. Applying the inverse permutation on the output results in
\begin{equation} \label{eq: appling conjugated sigma}
    (\sigma^{-1}\Phi(\sigma \cdot \mathbf{x}))_q = (\Phi(\sigma \cdot \mathbf{x}))_{\sigma(q)} = \sum_{p=1}^n \phi_{\sigma(q),\sigma(p)}(x_{p}) .
\end{equation}
The last transition uses \eqref{eq: phi of gx}. By the function sharing condition \eqref{eq: equivariant fs condition},
\begin{equation}
    (\sigma^{-1}\Phi(\sigma x))_q  = \sum_{p=1}^n \phi_{\sigma(q),\sigma(p)}(x_{p}) = \sum_{p=1}^n \phi_{q,p}(x_{p}) = \Phi(x)_q .
\end{equation}
Therefore, $\Phi$ is equivariant.
\end{proof}

\subsection{Proof for \propref{prop: equivariant fs is enough}} \label{appendix: proof of equivariant fs layer is enough}

% equivariant_proof.tex
\begin{proof} \label{proof: equivariant fs is enough}
To make the proof more concrete, we illustrate it using the case $G=S_n$, and encourage the reader to refer to Example \ref{example: S3 FS and not FS layers} to see how the argument aligns with that specific case.\\
Let $O_1, O_2, \ldots, O_H$ denote the orbits of $[n]^2$ under the action of $G$, where $H = |[n]^2 / G|$ is the number of orbits. Let $(q_h, p_h)$ be a representative of orbit $O_h$ for each $h \in [H]$. \\
In the example, we have $H = 2$. The first orbit consists of the diagonal entries $O_1 = \{(i, i) \mid i \in [n]\}$, with $(q_1,p_1) =(1,1)$ as the representative. The second orbit includes all off-diagonal entries $O_2 = \{(i, j) \mid i, j \in [n],~ i \neq j\}$, with $(q_2,p_2) =(1,2)$ as a representative.\\
Since $\Phi$ is $G$-equivariant, we have:
\begin{equation}
    \sigma^{-1} \Phi(\sigma \cdot \mathbf{x}) - \Phi(\mathbf{x}) = 0, \quad \forall \sigma \in G .
\end{equation}

Using Eq.~\eqref{eq: appling conjugated sigma}, this condition simplifies to:
\begin{equation} \label{eq: diff of outputs}
\begin{aligned}
    0 &= (\sigma^{-1}\Phi(\sigma \cdot \mathbf{x}))_q - \Phi(\mathbf{x})_q \\
    &= \sum_{p=1}^n \phi_{\sigma(q),\sigma(p)}(x_{p}) - \sum_{p=1}^n \phi_{q,p}(x_p) \\
    &= \sum_{p=1}^n (\phi_{\sigma(q),\sigma(p)}(x_{p}) - \phi_{q,p}(x_p)), \quad \forall \mathbf{x} .
\end{aligned}
\end{equation}
Using Lemma \ref{lemma: sum of constants}, each term in the sum must be constant. Thus, functions with indices from the same orbit differ only by a constant. For any orbit $O_h$ and for any $(q,p) \in O_h$, we define:
\begin{equation} \label{eq: defintion of c_q,p}
    C_{q,p} = \phi_{q,p}(\cdot) - \phi_{q_h,p_h}(\cdot) .
\end{equation}
In the $S_n$ case, it means that:\begin{equation}
    \begin{aligned}
        &  C_{2,2} = \phi_{2,2}(x) - \phi_{1,1}(x) ~ ; ~ C_{3,3} = \phi_{3,3}(x) - \phi_{1,1}(x) ~ ; \dots \\
        & C_{2,3} = \phi_{2,3}(x) - \phi_{1,2}(x)  ~ ; ~ C_{1,3} =  \phi_{1,3}(x) - \phi_{1,2}(x) ~ ; \dots 
    \end{aligned}
\end{equation}
We also define $\alpha_1, \dots, \alpha_q$ as:\begin{equation}
    \alpha_q \triangleq \sum_{p=1}^n C_{q,p}, ~~ q\in[n].
\end{equation}
Substitute \eqref{eq: defintion of c_q,p} in \eqref{eq: diff of outputs} results in:
\begin{equation} \label{eq: sum of constants}
\begin{aligned}
         0 & = \sum_{p=1}^n \left(\phi_{\sigma(q),\sigma(p)}(x_{p}) - \phi_{q,p}(x_p)\right) \\
        & = \sum_{p=1}^n \left(\phi_{\sigma(q),\sigma(p)}(x_{p})  -  \phi_{q_h,p_h}(x_p) +  \phi_{q_h,p_h}(x_p) - \phi_{q,p}(x_p) \right) \\
        & = \sum_{p=1}^n  C_{\sigma(q),\sigma(p)} -  C_{q,p} \\
        & =  \sum_{p=1}^n  C_{\sigma(q),\sigma(p)}  - \sum_{p=1}^n  C_{q,p} \\
        & = \sum_{p=1}^n  C_{\sigma(q),p} -\sum_{p=1}^n  C_{q,p} \\
        & = \alpha_{\sigma(q)} -\alpha_q .
\end{aligned}
\end{equation}
Where in the second last transition, we used the fact $\sigma$ is a bijection, and we can change the order of the sum.\\
\eqref{eq: sum of constants} suggests that:\begin{equation} \label{eq: alphas constant on the orbit}
    \alpha_q = \alpha_{\sigma(q)}, \forall \sigma \in G .
\end{equation}
In the example it means $\alpha_1 = \alpha_2 = \dots$ as all the set $[n]$ is in the same orbit.\\
Now, define $\hat{\Phi}$ to be a KA layer such that for any $(q,p) \in O_h$:
\begin{equation} \label{eq:defintion of hat phi}
    \hat{\phi}_{q,p}(\cdot) \triangleq \phi_{q_h,p_h}(\cdot) + \frac{1}{n} \alpha_{q_h}
\end{equation}
Since this definition depends only on orbit representative, $\hat{\Phi}$ is a $G$-equivariant FS-KA layer, i.e $  \hat{\phi}_{q,p} =   \hat{\phi}_{\sigma(q),\sigma(p)}$ for all $\sigma \in G$.  To show that $\Phi$ and $\hat{\Phi}$ represent the same function we will use Lemma \ref{lemma: sum over indicator}:
\begin{equation}
\begin{aligned}
\Phi(\mathbf{x})_q &= \sum_{p=1}^n \phi_{q,p}(x_p) \\
& = \sum_{p=1}^n \sum_{h=1}^H \phi_{q,p}(x_p) \mathbb{I}\{(q,p) \in O_h\} \\
&= \sum_{p=1}^n \sum_{h=1}^H (\phi_{q_h,p_h}(x_p) + C_{q,p}) \mathbb{I}\{(q,p) \in O_h\} ,\\
\end{aligned}
\end{equation}
where $\mathbb{I}$ is the indicator function. Substitute \eqref{eq:defintion of hat phi} into the equation,
\begin{equation}
\begin{aligned}
   & \dots = \sum_{p=1}^n \sum_{h=1}^H (\hat{\phi}_{q,p}(x_p) -  \frac{1}{n} \alpha_{q_h}+ C_{q,p}) \mathbb{I}\{(q,p) \in O_h\} \\
   & = \sum_{p=1}^n \sum_{h=1}^H \hat{\phi}_{q,p}(x_p) \mathbb{I}\{(q,p) \in O_h\} -  \frac{1}{n}  \sum_{p=1}^n \sum_{h=1}^H \alpha_{q_h}  \mathbb{I}\{(q,p) \in O_h\} + \sum_{p=1}^n \sum_{h=1}^H  C_{q,p} \mathbb{I}\{(q,p) \in O_h\} .
\end{aligned}
\end{equation}
We can simplify the first term using Lemma \ref{lemma: sum over indicator}:\begin{equation}
     \sum_{p=1}^n \sum_{h=1}^H \hat{\phi}_{q,p}(x_p) \mathbb{I}\{(q,p) \in O_h\}=  \sum_{p=1}^n \hat{\phi}_{q,p}(x_p) = \hat{\Phi}(\mathbf{x})_q .
\end{equation}
Furthermore, we can use the lemma to simplify the third term:\begin{equation}
    \sum_{p=1}^n \sum_{h=1}^H  C_{q,p} \mathbb{I}\{(q,p) \in O_h\} =  \sum_{p=1}^n C_{q,p} = \alpha_q .
\end{equation}
Note that if $(q,p) \in O_h$, there must exist a permutation that maps $q$ to $q_h$. Therefore by \eqref{eq: alphas constant on the orbit} it implies that $\alpha_q = \alpha_{q_h}$. Then the second term can also be simplified as:\begin{equation}
     \frac{1}{n}  \sum_{p=1}^n \sum_{h=1}^H \alpha_{q_h}  \mathbb{I}\{(q,p) \in O_h\} = \frac{1}{n}  \sum_{p=1}^n \sum_{h=1}^H \alpha_{q}  \mathbb{I}\{(q,p) \in O_h\} = \frac{1}{n} \sum_{p=1}^n \alpha_{q}  = \alpha_q .
\end{equation}
Combining it all, we get:\begin{equation}
    \Phi(\mathbf{x})_q = \hat{\Phi}(\mathbf{x})_q - \alpha_q + \alpha_q = \hat{\Phi}(\mathbf{x})_q  .
\end{equation}

Therefore, $\Phi$ and $\hat{\Phi}$ compute the same function, completing the proof.
\end{proof}
\subsection{Proof for \propref{prop: invariant fs}} \label{appendix: proof of invariant fs layer} 
\begin{proof}
The output of the layer is given by:\begin{equation}
    \Phi(\mathbf{x}) = \sum_{p=1}^n \phi_{p}(x_p)
\end{equation}
Applying permutation $\sigma \in G$ in the input space:\begin{equation} \label{eq: apply perm inv}
    \Phi (\sigma \cdot \mathbf{x})= \sum_{p=1}^n \phi_{p}(x_{\sigma^{-1}(p)}) =  \sum_{p=1}^n \phi_{\sigma(p)}(x_{p}) 
\end{equation}
By the invariant FS condition ($\phi_{\sigma(p)} = \phi_{p}$) 
\begin{equation} \label{eq: sum of inv neurons}
\begin{aligned}
        & \Phi(\sigma \cdot \mathbf{x})  = \sum_{p=1}^n \phi_{\sigma(p)}(x_{p}) = \sum_{p=1}^n \phi_{p}(x_p) =  \Phi(\mathbf{x})
\end{aligned}
\end{equation}
Which concludes the proof.
\end{proof}

\subsection{Proof for \propref{prop: invariant fs is enough}} 
    \label{appendix: proof of invariant fs layer is enough} 
    \begin{proof} \label{proof: invariant fs is enough }
    Denote $\Phi^{(bc)}: \reals^n\to \reals^n$ as a KA layer such that:\begin{equation}
        \Phi^{(bc)}_{q,p} = \Phi_{p}
    \end{equation}
i,e. $\Phi$ is broadcast $n$ times. \\
$\Phi$ is $G$-invariant, then by the definition of $ \Phi^{(bc)}$: \begin{equation}
    \Phi^{(bc)}(\sigma\cdot \mathbf{x})_q = \Phi(\sigma \cdot \mathbf{x}) =  \Phi(\mathbf{x}),~~ \forall \sigma \in G
\end{equation}
On the other hand,\begin{equation}
    (\sigma \cdot \Phi^{(bc)}(\mathbf{x}))_q = \Phi^{(bc)}(\mathbf{x})_{\sigma^{-1}(q)} =  \Phi(\mathbf{x})
\end{equation}
Therefore, $ \Phi^{(bc)}$ is an $G$-equivariant layer. by Proposition \ref{prop: equivariant fs is enough} there exist $G$-equivariant FS-KA layer $\hat{\Phi}^{(bc)}$ that is equivalent to $\Phi^{(bc)}$.
Now, define $\hat{\Phi}$ as:
\begin{equation}
    \hat{\Phi}(x) = \hat{\Phi}^{(bc)}(x)_1
\end{equation}
By the definition of $\Phi^{(bc)}$,  $\hat{\Phi}^{(bc)}(x)_1 = \Phi(x)$. Therefore, the layers are equivalent.\\
Using the equivariant FS condition for  $\hat{\Phi}^{(bc)}$ we can show that for any $p\in [n]$ and $\sigma \in G$: \begin{equation}
    \hat{\phi}_{\sigma(p)} = \hat{\phi}^{(bc)}_{1,\sigma(p)} =  \hat{\phi}^{(bc)}_{\sigma^{-1}(1),\sigma^{-1}(\sigma(p))} = \hat{\phi}^{(bc)}_{\sigma^{-1}(1),p} = \hat{\phi}^{(bc)}_{1,p} =   \hat{\phi}_{p}
\end{equation}
Therefore, $\hat{\Phi}$ is $G$-invariant FS-KA layer.
\end{proof}

\subsection{Proof for \propref{prop: multiple channels FS-KA layers}} \label{appx: proof for features}
The proof of this proposition follows exactly the same structure as the proofs of \propref{prop: equivariant fs is enough}, \ref{prop: invariant fs is enough} only with $x_i$ as a vector of dimension $d_\text{in}$ and $\phi_{q,p}$ replaced with $\Phi^{q,p}$.

\subsection{Proof for \propref{prop: FS is dense in WS}} \label{appendix: proof of S is dense in WS}

\begin{proof}
We begin by recalling Theorem 3.2 from \citet{wang2024expressivenessspectralbiaskans} establishes that each layer of an MLP with activation function $\sigma_k(\cdot) = max(0,\cdot)^k$ can be represented by a KAN with two hidden layers and grid size $G = 2$ with degree $k$ B-splines. Here, $k$ determines the polynomial degree of both the activation and the spline basis functions, while $G$ specifies the number of grid intervals used in the spline parameterization. Their proof demonstrates this by representing the linear part of an MLP layer using a single KAN layer and the non-linear activation using another KAN layer with a diagonal structure. i,e  for a bounded domain $\Omega \subset \reals^{d_{in}}$ and an affine layer $L:\reals^{d_{in}}\to \reals^{d_{out}}$ exists KA layers $\Phi:\reals^{d_{in}}\to \reals^{d_{out}}$ and $\Psi:\reals^{d_{out}}\to \reals^{d_{out}}$ such that:\begin{equation} \label{eq: results of wang}
    \begin{aligned}
        & \Phi(\mathbf{x}) = L(\mathbf{x}),~~\forall \mathbf{x} \in \Omega \\
        & \Psi (\mathbf{x}) = \sigma_k (\mathbf{x}), ~~\forall \mathbf{x} \in \Phi (\Omega)
    \end{aligned}
\end{equation}

In the general case, the MLP has equivariant layers and invariant layer followed by a standard MLP. As the work of \citet{wang2024expressivenessspectralbiaskans} already handles standard MLPs, we will focus on showing that any equivariant or invariant linear layer with $\sigma_k$ activation can be represented using one hidden layer FS-KAN.\\
Let $L:\Omega \to \reals^{n \times d_{out}}$ be an equivariant affine layer. By \citet{wang2024expressivenessspectralbiaskans} work exists a KA layer such that: 
    \begin{equation}
        \Phi(\mathbf{x}) = L(\mathbf{x}) , \forall \mathbf{x} \in \Omega
    \end{equation}
$\Phi$ is equivariant but not necessary FS layer.  By Proposition \ref{prop: multiple channels FS-KA layers} exists FS-KA layer $\hat{\Phi}$ such that:\begin{equation}
     \Phi(\mathbf{x}) = \hat{\Phi}(\mathbf{x})
\end{equation}
We will use $\Psi$ ( defined on \eqref{eq: results of wang} ) to construct KA layer $\bar{\Psi}$ that simulate the ReLU activation:
\begin{equation}
    \bar{\Psi}^{q,p}  = \begin{cases}
        \Psi, & q=p \\
        0, & q \neq p
    \end{cases} 
\end{equation}
Note that $\bar{\Psi}$ acts element-wise, thus $S_n$-equivariant and therefore $G$-equivariant as well. Furthermore,  $\bar{\Psi}$  is a FS-KA layer. \\
Thus,  $\bar{\Psi}\circ \hat{\Phi}$ is an FS-KAN  with one hidden layer that simulates $\sigma_k \circ L$ on a bounded domain. The full network is realizable by simply composing the realizations of each layer while considering the previous layer's image to be the current layer's input domain.
\end{proof}

\subsection{Proof for \propref{prop: WS is dense in FS}} \label{appendix: proof of WS is dense in FS}
\begin{proof}

We will prove the claim for a single equivariant layer $\Phi$. The invariant layer proof follows the same proof structure. By definition, a parameter-sharing equivariant layer can be written as:
\begin{equation}
    \Phi(\mathbf{x})_q = \sum_{p=1}^n \Phi^{(q,p)}(\mathbf{x}_p)
\end{equation}
where each $\Phi^{(q,p)}$ is a function $\reals^{d_{\text{in}}} \to \reals^{d_{\text{out}}}$, and the functions satisfy the equivariance FS condition:
\begin{equation} \label{eq:fs-condition}
    \Phi^{(q,p)} = \Phi^{(\sigma(q), \sigma(p))}, \quad \forall \sigma \in G
\end{equation}

Let $\epsilon > 0$. By the universal approximation theorem \cite{HORNIK1989359} for MLPs with ReLU activations, for each function $\Phi^{(q,p)}$ (which is continuous on a compact domain), there exists a two-layer MLP $f^{(q,p)}$ with ReLU activations such that:
\[
    \left\| \Phi^{(q,p)} - f^{(q,p)} \right\|_\infty < \frac{\epsilon}{n^2}
\]

We assume all $f^{(q,p)}$ have the same width $d_{max}$ by padding with zeros if needed. We denote  $W_1^{(q,p)}, b_1^{(q,p)}$ as the weights and biases of the first input layer and $W_2^{(q,p)}, b_2^{(q,p)}$ as those of the output linear layer.

Because $\Phi^{(q,p)} = \Phi^{(\sigma(q),\sigma(p))}$, we may assume that $f^{(q,p)} = f^{(\sigma(q),\sigma(p))}$ and therefore their parameters are shared accordingly:
\begin{equation} \label{eq: W,b are shared}
        W_1^{(q,p)} = W_1^{(\sigma(q),\sigma(p))}, \quad W_2^{(q,p)} = W_2^{(\sigma(q),\sigma(p))}, \text{ etc.}
\end{equation}

We now construct a two-layer MLP approximating $\Phi(x)$.
the first layer $L_1:\reals^{n\times d_{in}} \to \reals^{n^2 \times d_{max}}$ construction defined by:
\begin{equation}
    L_1(\mathbf{x})_{q,p} = W_1^{(q,p)} \mathbf{x}^{(p)} + b_1^{(q,p)}
\end{equation}
We will show this is indeed an equivariant linear map:\begin{equation}
    L_1(\sigma \cdot\mathbf{x})_{q,p} = W_1^{(q,p)} \mathbf{x}^{(\sigma^{-1}(p))} + b_1^{(q,p)}
\end{equation}
On the other hand,\begin{equation}
   (\sigma \cdot L_1(\mathbf{x}))_{q,p} = (L_1(\mathbf{x}))_{\sigma^{-1}(q),\sigma^{-1}(p)} = W_1^{\sigma^{-1}(q),\sigma^{-1}(p)} \mathbf{x}^{(\sigma^{-1}(p))} + b_1^{\sigma^{-1}(q),\sigma^{-1}(p)}
\end{equation}
Using \eqref{eq: W,b are shared}:\begin{equation}
     (\sigma \cdot L_1(\mathbf{x}))_{q,p} = W_1^{(q,p)} \mathbf{x}^{(\sigma^{-1}(p))} +  b_1^{(q,p)} =  L_1(\sigma \cdot\mathbf{x})_{q,p}
\end{equation}
$L_1$ is an equivariant linear map, so it must be a parameter-sharing linear layer \citep{wood1996representation}.
Second layer construction $L_2:\reals^{n^2 \times d_{max}} \to  \reals^{n\times d_{out}}$  will be defined as:
\begin{equation}
    L_2(x)_q = \sum_{p=1}^n W_2^{(q,p)} x_{q,p} +  b_2^{(q,p)}
\end{equation}
Similarly, $L_2$ is also equivariant linear map, therefore has a parameter sharing structure.
The parameter sharing MLP $f$ is given by:\begin{equation}
    f(x) = L_2 [L_1(x)]_{+} 
\end{equation}
Where $[\cdot]_+$ is the ReLU activation.  By the construction of $f$:
\begin{equation}
    f (\mathbf{x})_q =  \sum_{p=1}^n W_2^{(q,p)} [ W_1^{(q,p)} \mathbf{x}^{(p)} + b_1^{(q,p)}]_{+} +  b_2^{(q,p)} = \sum_{p=1}^n f^{(q,p)}(\mathbf{x}_p)
\end{equation}
In total, the approximation error of the layer would be $\epsilon$ as:
\begin{equation}
\begin{aligned}
     ||\Phi(x) - f(x) || & = \sum_{q} ||\Phi(x)_q - f(x)_q ||\\
     &  = \sum_{q} || \sum_p \Phi^{(q,p)}(x_p) - f^{(q,p)}(x_p) || \\
     &  \leq \sum_{q}  \sum_p || \Phi^{(q,p)}(x_p) - f^{(q,p)}(x_p) || \\
     & = n^2\sum_{q}  \sum_p \epsilon /n^2 \\
     & = \epsilon
\end{aligned}
\end{equation}
Using Lemma 6 from \citet{lim2022signbasisinvariantnetworks} (\emph{Layer-wise universality implies universality}), using composition of the MLPs approximation for each layer approximates the whole MLP up to any precision.
\end{proof}

\subsection{Proof for \propref{prop: fs for direct product}} \label{appendix: proof of fs for direct product}

\begin{proof}
Let $P:[n]\times [m] \to [nm]$ be a bijection. We define the flatten form of $\mathbf{x}\in \reals^{n\times m \times d}$ as:\begin{equation}
    Vec(\mathbf{x})_i = \mathbf{x}_{P^{-1}(i)}
\end{equation}
The group action of $G\times H$ extend naturally as:\begin{equation}
    \begin{aligned}
        & (g,h) \cdot P(i,j) =  P((g,h)\cdot(i,j))  = P(g(i), h(j)),\quad (g,h)\in G\times H \\
        & ((g,h) \cdot Vec(\mathbf{x})) = Vec((g,h) \cdot \mathbf{x})
    \end{aligned}
\end{equation}
We will define the KA layer 
 $ \bar{\Phi} $ on the flattened vector in an implicit way: 
 \begin{equation}
     \Phi^{q_1,p_1,q_2,p_2} = \bar{\Phi}^{P_1(q_1,q_2), P(p_1,p_2)}
 \end{equation}
On the one hand,
 \begin{equation}
 \begin{aligned}
     \Phi(x)_{q_1,q_2} & = \sum_{p_1=1}^n\sum_{p_2=1}^n \Phi^{q_1,p_1,q_2,p_2}(x_{p_1,p_2}) \\
     & = \sum_{p_1=1}^n\sum_{p_2=1}^n \bar{\Phi}^{P_1(q_1,q_2), P(p_1,p_2)} (x_{p_1,p_2}) \\
     & = \sum_{p_1=1}^n\sum_{p_2=1}^n \bar{\Phi}^{P_1(q_1,q_2), P(p_1,p_2)} (Vec(x)_{P(p_1,p_2)}) \\
     & = \bar{\Phi}(Vec(x))_{P_1(q_1,q_2)} \\     
 \end{aligned}
 \end{equation}
 On the other hand,\begin{equation}
     \Phi(x)_{q_1,q_2}  = Vec(\Phi(x))_{P(q_1,q_2)}
 \end{equation}
 Therefore,
 \begin{equation}
    \bar{\Phi}(Vec(x)) = Vec(\Phi(x))
\end{equation}
Specifically, $ \bar{\Phi}$ is $G\times H$ equivariant, then by \propref{prop: multiple channels FS-KA layers} exists an equivalent equivariant FS-KA layer $\Tilde{\Phi}$. Note that $G\times H$ acts only on the first coordinate, and therefore, the use of our proposition is justified. By the FS condition:\begin{equation}
    \Tilde{\Phi}^{i,j}  =  \Tilde{\Phi}^{\sigma(i),\sigma(j)} , \forall \sigma \in G\times H
\end{equation}
We will construct the equivalent KA layer $\bar{\Phi}$ for the tensor form by:\begin{equation}  \hat{\Phi}^{i_1,i_2,j_1,j_2} = \Tilde{\Phi}^{P(i_1,j_1),P(i_2,j_2)}
\end{equation}
This construction implies too that $ \Tilde{\Phi}(Vec(x)) = Vec(\hat{\Phi}(x))$
By the FS condition and by construction of $\hat{\Phi}$ we get:
\begin{equation}
\begin{aligned}
\hat{\Phi}^{g(i_1),g(i_2),h(j_1),h(j_2)} & = \Tilde{\Phi}^{P(g(i_1),h(j_1)),P(g(i_2),h(j_2))} \\
    & = \Tilde{\Phi}^{P(i_1,j_1),P(i_2,j_2)}  \\
    & = \bar{\Phi}^{i_1,i_2,j_1, j_2} , \quad \forall (g,h) \in G\times H
\end{aligned} 
\end{equation} 
For $h= e_H$ (identity of $H$), we prove the sharing of the KA-sub layers, and for $g = e_G$, we prove the sharing within each sub-layer.
\end{proof}

\section{Implementation details} \label{appendix: Implementation details}

All experiments on were implemented using the PyTorch framework and trained using the AdamW optimizer \citep{loshchilov2017decoupled}. The FS-KAN variants implementations are based on the \texttt{Efficient-KAN} \citep{efficientkan2024} package and the ConvKAN \citep{bodner2024convolutionalkolmogorovarnoldnetworks} GitHub repository, with the exception of the symbolic formula experiments, which used the PyKAN library \citep{liu2024kankolmogorovarnoldnetworks} and LBFGS optimizer for both KAN and FS-KAN implementations. For MLP-based models, we applied $\ell_2$ weight regularization; for KAN-based models, we used the regularization provided by the \texttt{Efficient-KAN} library.  The exact parameter count of the invariant/equivariant models are in Table \ref{tab:parameter_comparison}. As for the non-invariant standard KAN in the point cloud classification task, the parameter counts vary between 42,240 and 503,040 depending on the specific configuration. The regularization term was scaled by a hyper-parameter $\eta$ in both cases. Models were trained on NVIDIA V100 GPUs. We trained the models with 5 different seeds in all experiments and configurations. 

While Group Convolutional Neural Networks (GCNNs) \cite{cohen2016group} can also be considered as a baseline, they become computationally intractable for the large symmetry groups we consider, as group convolutions scale with the size of the group. This illustrates how our approach is more practical for real-world applications.

\begin{table}[h]
\scriptsize
\centering
\begin{tabular}{|l|c|c|c|}
\hline
\textbf{Task} & \textbf{FS-KAN \& Efficient FS-KAN} & \textbf{Scaled Baseline} & \textbf{Standard Baseline} \\
\hline
Signal classification & 33,834 & 35,467 (SSEM) & 3,234,643 (SSEM) \\
\hline
Point cloud classification & 55,584 & 55,976 (DeepSets) & 55,159 (Point Transformer) \\
\hline
Recommendation system & 88,608 & 90,017 (DeepSets) & 2,114,309 (DeepSets) \\
\hline
\end{tabular}
\caption{Parameter count comparison across different tasks and methods}
\label{tab:parameter_comparison}
\end{table}

\subsection{Learning invariant symbolic formulas} \label{appendix: symbolic formulas}

The dataset generation follows the procedure provided in \citet{liu2024kankolmogorovarnoldnetworks}. FS-KAN models consist of one equivariant layer with single feature channels followed by one invariant layer. While the FS-KAN is illustrated for $n=3$, it can handle varying input sizes. For the baseline KAN, we use one hidden layer of width 3. Both models use L-BFGS optimization with 25 steps and $\lambda=0.001$. We refer to \figref{fig: interperabilty} for more results on different invariant formulas.

\begin{figure}[h!]
\centering
    \begin{subfigure}[b]{0.4\textwidth}
        \centering
        \includegraphics[width=\textwidth]{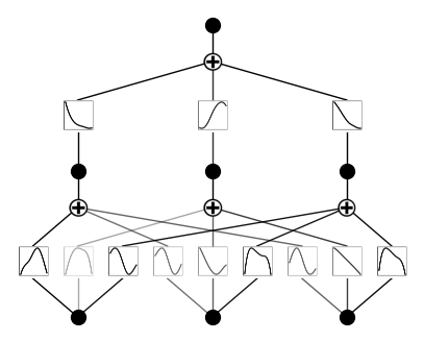}
        \caption{Standard KAN}
        \label{fig:kan_viz_func1}
    \end{subfigure}
    \begin{subfigure}[b]{0.4\textwidth}
        \centering
        \includegraphics[width=\textwidth]{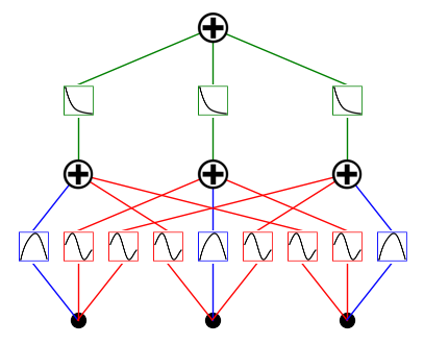}
        \caption{FS-KAN}
        \label{fig:fskan_viz_func1}
    \end{subfigure}
    \begin{subfigure}[b]{0.4\textwidth}
        \centering
        \includegraphics[width=\textwidth]{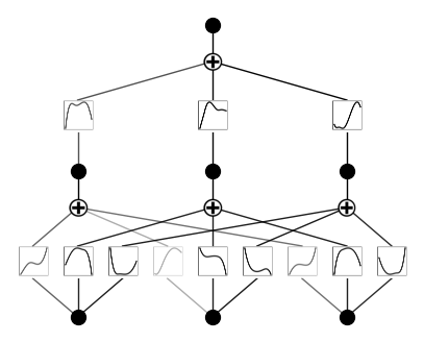}
        \caption{Standard KAN}
        \label{fig:kan_viz_func2}
    \end{subfigure}
    \begin{subfigure}[b]{0.4\textwidth}
        \centering
        \includegraphics[width=\textwidth]{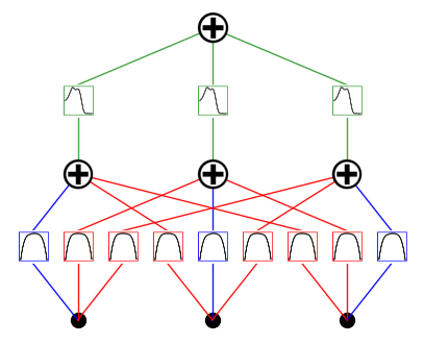}
        \caption{FS-KAN}
        \label{fig:fskan_viz_func2}
    \end{subfigure}
    \caption{Visualization of learned spline functions. Top row: $f(x) = e^{\frac{10x_1^{2} + \sin(\pi x_2) + \sin(\pi x_3)}{3}} + e^{\frac{10x_2^{2} + \sin(\pi x_1) + \sin(\pi x_3)}{3}} + e^{ \frac{10 x_3^{2} + \sin(\pi x_1) + \sin(\pi x_2)}{3}}$. Bottom row: $f(x) = \tanh(5 (x_1^4 + x_2^4 +x_3^4 ) - 1)$.}
    \label{fig: interperabilty}
\end{figure}

\subsection{Signal classification experiment} \label{appendix: Signal classification}

\textbf{Data preparation.}  
The dataset generation follows the procedure described on \citet{maron2020learning}, where each clean signal was randomly selected from a predefined set of types - sine, square, and sawtooth with $T=100$ time steps. To construct a set of noisy measurements, each signal is duplicated multiple times, adding independent Gaussian noise to each copy.  All samples were generated using the code provided by \citet{maron2020learning}. Validation and test sets have a fixed size of $300$ samples each. 

\textbf{Networks and training.}  
FS-KAN and efficient FS-KAN models have 3 hidden layers of width $16, 16, 8$, while the baseline DSS used hidden layers of width $160, 160, 80$. Each layer was followed by a batch norm as well. We used the DSS implementation provided by \citet{maron2020learning}. As mentioned earlier, we also trained a scaled DSS to match the number of parameters in the KAN-based models. We used a batch size of $64$ as in \citet{maron2020learning}. For each configuration, we trained the models with 5 different seeds. We report in Table \ref{tab:runtime for signals} the training durations for all models.

\begin{table}[ht]
\small
    \centering
    \begin{subtable}[t]{1\textwidth}
        \centering
        \begin{tabular}{|l|c|c|c|c|c|}
        \hline
        \textbf{Model}  & \textbf{60} & \textbf{120} & \textbf{600} & \textbf{900} & \textbf{1200} \\
        \hline
        DSS & $76.1 \pm 0.3$ & $93.9 \pm 0.6$ & $240.0 \pm 0.3$ & $356.7 \pm 25.9$ & $481.0 \pm 30.8$ \\
        \hline
        scaled DSS & $26.9 \pm 2.1$ & $32.8 \pm 1.6$ & $66.8 \pm 3.9$ & $125.2 \pm 68.4$ & $58.4 \pm 21.7$ \\
        \hline
        FS-KAN & $422.6 \pm 2.5$ & $499.7 \pm 5.1$ & $1183.7 \pm 10.9$ & $2025.4 \pm 29.4$ & $2228.9 \pm 315.2$ \\
        \hline
        Efficient FS-KAN & $300.8 \pm 6.0$ & $377.7 \pm 13.6$ & $1146.5 \pm 275.4$ & $1315.9 \pm 51.4$ & $1654.8 \pm 194.8$ \\
        \hline
        \end{tabular}
        \caption{Low data regime}
        \label{tab:runtime for small signals}
    \end{subtable}
    \vfill
    \begin{subtable}[t]{1\textwidth}
        \centering
    \begin{tabular}{|l|c|c|c|c|c|}
    \hline
    \textbf{Model}   & \textbf{6000} & \textbf{12000} & \textbf{18000} & \textbf{24000} & \textbf{30000} \\
    \hline
    DSS  & $2e3 \pm 5.4$ & $3.9e3 \pm 7.6$ & $5.9e3 \pm 8.5$ & $7.8e3 \pm 12.6$ & $9.7e3 \pm 10.0$ \\
    \hline
    scaled DSS  & $150.3 \pm 1.8$ & $385.4 \pm 147.6$ & $1.0e3 \pm 25.3$ & $1.0e3 \pm 437.1$ & $1.3e3 \pm 553.6$ \\
    \hline
    FS-KAN &  $9.4e3 \pm 95.2$ & $1.9e4 \pm 164.5$ & $2.8e4 \pm 102.2$ & $3.8e4 \pm 245.9$ & $4.5e4 \pm 1098.8$ \\
    \hline
    Efficient FS-KAN  & $7.8e3 \pm 79.8$ & $1.5e4 \pm 275.6$ & $2.3e4 \pm 219.2$ & $3.1e4 \pm 497.9$ & $3.8e4 \pm 827.0$ \\
    \hline
    \end{tabular}
        \caption{High data regime}
        \label{tab:runtime for big signals}
    \end{subtable}
    \caption{Training time (seconds) of models for signal classification task (\ref{subsec: signal classifactions}). All models were trained for $200$ epochs with a batch size of $64$.}
    \label{tab:runtime for signals}
\end{table}

\paragraph{Hyper-parameters.}
We perform a grid search over training learning rates $\mu \in\{ {10^{-2}, 10^{-3}, \ldots, 10^{-6}} \}$ and regularization loss coefficients $\eta \in \{0, 10^{-2}, 10^{-4}\}$ with 6000 training samples. Hyper-parameters are selected based on the highest validation accuracy. All models achieve their best validation performance with $\mu = 10^{-3}$ and $\eta = 10^{-2}$.

\subsection{Point clouds classification} \label{appendix: point clouds exp}
\textbf{Data preparation.}  
For this experiment, we used ModelNet40 \citep{wu20153dshapenetsdeeprepresentation}, which contains around $12\times 10^3$ point clouds from different $40$ classes. Each point cloud contains $n=1024$ points represented by $d=3$ spatial coordinates. We used the normalized and sampled point clouds by \citet{qi2017pointnet} with no data augmentations. For the validation set, we used a $90\%-10\%$ split of the full train set.

\textbf{Networks and training.}
All invariant models are composed of equivariant layers, followed by an invariant layer and a standard output layer of width $40$. FS-KAN and efficient KANs are composed of 2 equivariant FS-KA layers of width $36$, each followed by a $1D$ batch norm. The invariant layer has an output dimension of $36$ as well. The DeepSets model has the same architecture, only with a width of $128$ to match the number of parameters ($5.6 \times 10^4$). The non-equivariant KAN consists of 3 hidden layers of width $16$, but since it cannot handle varying input sizes, its parameter count depends on the number of input points, ranging from $4.2 \times 10^4$ to $5.0 \times 10^5$ parameters. The transformer model uses an embedding dimension of $38$ with $3$ layers ($5.5 \times 10^4$ parameters), implemented with an input projection layer, transformer encoder layers with multi-head attention and a classification head with intermediate dimensionality reduction. For training and testing, we used a batch size of 32. Additionally, we conducted an experiment to measure the average runtime and peak GPU memory usage for both the training and inference steps for the FS-KANs and DeepSets, as reported in Tables \ref{tab: times for point clouds} and \ref{tab:gpu_memory}. We also conducted experiments in the high-data regime using the full ModelNet40 dataset (8,859 training samples with 1,024 points per cloud), a standard benchmark in 3D computer vision. The results, presented in Table \ref{tab:high_data_modelnet40}, demonstrate that FS-KAN variants achieve comparable accuracy with established architectures like DeepSets and Point-Transformer in this setting, complementing our low-data regime analysis.

% Training time comparison table
\begin{table}[ht]
    \centering
    \begin{subtable}[t]{1\textwidth}
    \caption{Train time per step (ms)}
    \label{tab:training_times}
    \begin{tabular}{|l|c|c|c|c|c|}
        \hline
        \textbf{Model} & \textbf{n=64} & \textbf{n=128} & \textbf{n=256} & \textbf{n=512} & \textbf{n=1024} \\
        \hline
        DeepSets & $4.09 \pm 0.46$ & $3.43 \pm 1.13$ & $4.22 \pm 0.65$ & $2.53 \pm 1.02$ & $1.97 \pm 0.24$ \\
        \hline
        Efficient FS-KAN & $6.19 \pm 1.50$ & $13.17 \pm 0.77$ & $6.11 \pm 0.07$ & $14.57 \pm 0.11$ & $41.70 \pm 0.35$ \\
        \hline
        FS-KAN & $6.33 \pm 0.14$ & $7.21 \pm 2.26$ & $7.53 \pm 1.62$ & $21.04 \pm 0.19$ & $61.78 \pm 0.53$ \\
        \hline
        Point-Transformer & $6.76 \pm 0.43$ & $6.56 \pm 0.09$ & $6.60 \pm 0.24$ & $17.67 \pm 6.76$ & $63.41 \pm 20.48$ \\
        \hline
        Standard KANs & $4.79 \pm 0.28$ & $5.41 \pm 1.53$& $6.26 \pm 2.16$ & $4.84 \pm 0.36$ & $6.12\pm 0.52$ \\
        \hline
    \end{tabular}
    \end{subtable}
    \vfill
    \begin{subtable}[t]{1\textwidth}
    \caption{Inference time per step (ms)}
    \centering
    \label{tab:inference_times}
    \begin{tabular}{|l|c|c|c|c|c|}
        \hline
        \textbf{Model} & \textbf{n=64} & \textbf{n=128} & \textbf{n=256} & \textbf{n=512} & \textbf{n=1024} \\
        \hline
        DeepSets & $0.28 \pm 0.02$ & $0.29 \pm 0.03$ & $0.28 \pm 0.02$ & $0.27 \pm 0.01$ & $0.54 \pm 0.03$ \\
        \hline
        Efficient FS-KAN & $2.00 \pm 0.04$ & $2.00 \pm 0.06$ & $2.00 \pm 0.03$ & $4.94 \pm 0.03$ & $13.39 \pm 0.09$ \\
        \hline
        FS-KAN & $2.01 \pm 0.05$ & $2.04 \pm 0.04$ & $2.02 \pm 0.03$ & $7.11 \pm 0.03$ & $19.81 \pm 0.15$ \\
        \hline
        Point-Transformer & $0.61 \pm 0.02$ & $0.60 \pm 0.01$ & $0.60 \pm 0.01$ & $2.01 \pm 1.78$ & $6.54 \pm 6.47$ \\
        \hline
        Standard KANs & $1.20 \pm 0.01$ & $1.20 \pm 0.01$ & $1.22 \pm 0.00$ & $1.21 \pm 0.01$ & $1.19 \pm 0.01$ \\
    \hline
    \end{tabular}
    \end{subtable}
    \vspace{1em}
    \caption{Comparison of training times~\protect\subref{tab:training_times} and inference times~\protect\subref{tab:inference_times} (ms) per step for different models on the point cloud classification task (batch size = 64) and for point clouds with $n$ points. We measure the runtime over 10 steps for each configuration and report the mean and standard deviation.}
    \label{tab: times for point clouds}
\end{table}

\begin{table}[ht]
    \centering
    \begin{subtable}[t]{1\textwidth}
    \centering  % Add this line
    \caption{Training phase (MB)}
    \label{tab:train_memory}
    \begin{tabular}{|l|c|c|c|c|c|}
        \hline
        \textbf{Model} & \textbf{n=64} & \textbf{n=128} & \textbf{n=256} & \textbf{n=512} & \textbf{n=1024} \\
        \hline
        DeepSets & 31.23 & 45.28 & 73.37 & 129.56 & 241.94 \\
        \hline
        Efficient FS-KAN & 117.16 & 216.44 & 401.38 & 786.75 & 1556.00 \\
        \hline
        FS-KAN & 157.81 & 301.25 & 566.66 & 1120.84 & 2222.47 \\
        \hline
        Point-Transformer & $70.01$ & $148.90$ & $397.59$ & $1323.53$ & $4797.90$ \\
    \hline
    Standard KANs & $18.78$ & 21.16 & 25.99 & 35.46 & 54.53 \\
    \hline
    \end{tabular}
    \end{subtable}
    \vfill
    \begin{subtable}[t]{1\textwidth}
    \centering  % Add this line (remove the duplicate from line below)
    \caption{Inference phase (MB)}
    \label{tab:inference_memory}
    \begin{tabular}{|l|c|c|c|c|c|}
        \hline
        \textbf{Model} & \textbf{n=64} & \textbf{n=128} & \textbf{n=256} & \textbf{n=512} & \textbf{n=1024} \\
        \hline
        DeepSets & 23.20 & 29.24 & 41.34 & 65.53 & 145.90 \\
        \hline
        Efficient FS-KAN & 43.05 & 68.36 & 114.71 & 215.14 & 406.52 \\
        \hline
        FS-KAN & 42.57 & 68.86 & 116.33 & 219.02 & 414.89 \\
        \hline
        Point-Transformer & $22.95$ & $31.50$ & $61.22$ & $169.28$ & $577.40$ \\
        \hline
        Standard KANs & $18.89$ & 21.35 & 26.31 & 36.07& 55.70\\
        \hline
    \end{tabular}
    \end{subtable}
    \vspace{1em}
    \caption{Comparison of GPU memory usage during training~\protect\subref{tab:train_memory} and inference~\protect\subref{tab:inference_memory} (MB) for different models on the point cloud classification task and for point clouds with $n$ points.}
    \label{tab:gpu_memory}
\end{table}

\begin{table}[h!]
\centering
\small

\begin{tabular}{|l|c|c|c|c|c|}
\hline
\textbf{Model} & \textbf{DeepSets} & \textbf{FS-KAN} & \textbf{Efficient FS-KAN} & \textbf{Point-Transformer} & \textbf{Standard KAN} \\
\hline
Test Accuracy & $0.844 \pm 0.005$ & $0.837 \pm 0.007$ & $0.840 \pm 0.005$ & $0.840 \pm 0.008$ & $0.232 \pm 0.074$ \\
\hline
\end{tabular}
\caption{Test accuracy on full ModelNet40 dataset (high-data regime)}
\label{tab:high_data_modelnet40}
\end{table}

\paragraph{Hyper-parameters.}
We performed a grid search over training learning rates $\mu \in \{10^{-2},\ 5 \times 10^{-3},\ 10^{-3},\ 5 \times 10^{-4},\ 10^{-4},\ 5 \times 10^{-5}\}$ and regularization loss coefficients $\eta \in \{10^{-5},\ 10^{-3},\ 0\}$. The training and validation sets consist of point clouds with $n=1024$ points. The training set size for the search is 600. For FS-KAN models, the best validation performance is achieved with $\mu = 10^{-2}$ and $\eta = 10^{-5}$, while for DeepSets, the best configuration is $\mu = 10^{-2}$ and $\eta = 0$. The transformer model achieves optimal performance with $\mu = 10^{-3}$ and $\eta = 10^{-5}$, and the non-equivariant KAN with $\mu = 10^{-2}$ and $\eta = 0$.

\subsection{Continual learning on point clouds} \label{appendix: cl on pc}

\textbf{Data preparation.} For task A, we followed the procedure described in \appxref{appendix: point clouds exp}. For task B, we applied geometric transformations to the task A dataset consisting of: (i) random scaling sampled uniformly from $[0.95, 1.0]$, (ii) 3D rotation via axis-angle representation with rotation angles drawn from $\mathcal{N}(0, 0.5^2)$, and (iii) random translation with components sampled from $\mathcal{N}(0, 0.1^2 \mathbf{I})$.

\textbf{Network and training.} We use the same models from \appxref{appendix: point clouds exp}. We employ identical hyper-parameters for each model across both tasks. Each phase consisted of 1000 epochs, with the optimizer reinitialized at the beginning of each phase.

\subsection{Semi-supervised rating prediction experiment} \label{appendix: rating prediction}

\textbf{Data preparation.}  
We used the MovieLens 100k dataset \citep{10.1145/2827872} along with smaller versions of Douban, Flixster, and Yahoo, as introduced by \citet{monti2017geometric}. In the Flixster dataset, where ratings are given in 0.5 increments, we labeled rating $0.5$ as $1$ represented non-integer ratings as probabilistic mixtures (e.g., a rating of 3.5 was modeled as a $50\%$ chance of 3 and $50\%$ chance of 4). For the Yahoo dataset, ratings ranging from 1 to 100 were clustered into 5 categories and treated as $1-5$ values.

To simulate low-data scenarios, we uniformly sampled subsets of rows and columns from each rating matrix. Within each resulting sub-matrix, we further sub-sampled the observed entries to reduce training density. The train set was composed of uniformly sampled sub-matrices of the sparse matrix. The train set consists of 32 sub-matrices, and the test and validation sets have 64 samples each. The full data sampling process is illustrated in \figref{fig: data prepartion}. Data was split into train, validation, and test sets using the U1 split defined by \citet{10.1145/2827872}.

\textbf{Network and training.}  All models are composed of 9 equivariant layers, each followed by a skip connection and batch norm. KAN and FS-KAN architectures used layers of width 16 while SSEM uses layers of width 256 as detailed on \citet{hartford2018deep}. A smaller SSEM with a width of 52 was also trained to match the parameter count of the KAN models. For each configuration, we trained the models with 5 different seeds.

\paragraph{Hyper-parameter search}  
We perform a grid search over training learning rates $\mu \in \{10^{-2},\ 10^{-3},\ 10^{-4},\ 10^{-5}\}$ and regularization loss coefficients $\eta \in \{10^{-3},\ 10^{-5},\ 0\}$, using approximately 600 ratings per run. Hyper-parameters are selected based on the lowest validation RMSE. The best-performing configurations are as follows:
\begin{itemize}
    \item \textbf{KAN} and \textbf{Efficient FS-KAN}: $\mu = 10^{-4}$, $\eta = 10^{-5}$ across all datasets.
    \item \textbf{SSEM} and \textbf{Scaled SSEM}:
    \begin{itemize}
        \item On ML-100K, Flixster, and Yahoo: $\mu = 10^{-4}$, $\eta = 10^{-5}$.
        \item On Douban:
        \begin{itemize}
            \item SSEM: $\mu = 10^{-3}$, $\eta = 10^{-5}$.
            \item Scaled SSEM: $\mu = 10^{-3}$, $\eta = 0$.
        \end{itemize}
    \end{itemize}
\end{itemize}

\section{Hyperparameter Study on FS-KAN Models} \label{appx: ablation study}

We evaluated our FS-KAN model on point cloud classification (\secref{section: experiments}), ablating over different architectural configurations. Specifically, we examined different widths $w \in \{36, 64, 128\}$, depths $L \in \{2, 4, 6\}$, and grid sizes $g \in \{3, 4, 5, 6\}$ of the spline functions. We note that increasing each of these parameters increases the model size. For each hyper-parameter configuration, we trained models on 600 samples, each containing $n=256$ points.

\textbf{Results.} Results are presented in Table \ref{tab:ablation}. Our findings demonstrate that models with $d=4$ consistently achieve superior test accuracies across all examined grid sizes, suggesting an optimal architectural configuration at moderate depth levels. Increasing the layer width from $36$ to $128$ yields notable performance improvements (approximately $0.010 - 0.015$), with width $128$ achieving the best results. However, this comes at a significant computational cost: as shown in Tables \ref{tab:avg_train_time_n256} and \ref{tab:peak_train_mem_n256}, training time and memory usage scale super-linearly with width and more linearly with depth, while grid size increases show moderate impact. Regarding grid size, larger values ($g=4$ to $g=6$) generally provide better accuracy with diminishing returns. Considering both accuracy (Table \ref{tab:ablation}) and computational efficiency (Tables \ref{tab:avg_train_time_n256}, \ref{tab:avg_inference_time_n256}, \ref{tab:peak_train_mem_n256}), our results indicate that a configuration of $d=4, w=128, g=4$ optimally balances classification performance and model complexity for point cloud classification tasks.

\begin{table}[h!]
\centering
\begin{tabular}{|c|c|c 
|c |c |c|}
\hline
\textbf{Depth} & \textbf{Width} & \textbf{$g=3$} & \textbf{$g=4$} & \textbf{$g=5$} & \textbf{$g=6$} \\
\hline
\multirow{3}{*}{\textbf{2}} & \textbf{36} & 0.637 ± 0.008 & 0.637 ± 0.008 & 0.644 ± 0.003 & 0.640 ± 0.009 \\
  & \textbf{64} & 0.644 ± 0.012 & 0.644 ± 0.007 & 0.650 ± 0.004 & 0.653 ± 0.006 \\
  & \textbf{128} & 0.646 ± 0.008 & 0.648 ± 0.004 & 0.653 ± 0.009 & 0.652 ± 0.009 \\
\hline
\multirow{3}{*}{\textbf{4}} & \textbf{36} & 0.645 ± 0.011 & 0.645 ± 0.009 & 0.643 ± 0.010 & 0.647 ± 0.005 \\
  & \textbf{64} & 0.644 ± 0.004 & 0.653 ± 0.007 & 0.652 ± 0.007 & 0.650 ± 0.007 \\
  & \textbf{128} & 0.649 ± 0.006 & 0.658 ± 0.007 & 0.654 ± 0.011 & 0.660 ± 0.002 \\
\hline
\multirow{3}{*}{\textbf{6}} & \textbf{36} & 0.631 ± 0.010 & 0.637 ± 0.006 & 0.642 ± 0.013 & 0.641 ± 0.005 \\
  & \textbf{64} & 0.642 ± 0.005 & 0.643 ± 0.009 & 0.640 ± 0.009 & 0.647 ± 0.005 \\
  & \textbf{128} & 0.639 ± 0.007 & 0.635 ± 0.010 & 0.639 ± 0.011 & 0.640 ± 0.008 \\
\hline
\end{tabular}
\caption{FS-KAN test accuracy results for different architectural configurations on point cloud classification}
\label{tab:ablation}
\end{table}

\begin{table}[h!]
\centering
\caption{Average Training Time (ms) for Point Cloud Size n=256}
\label{tab:avg_train_time_n256}
\begin{tabular}{|c|c|c|c|c|c|}
\hline
\multirow{2}{*}{\textbf{Depth}} & \multirow{2}{*}{\textbf{Width}} & \multicolumn{1}{c}{\textbf{$g=3$}} & \multicolumn{1}{c}{\textbf{$g=4$}} & \multicolumn{1}{c}{\textbf{$g=5$}} & \multicolumn{1}{c}{\textbf{$g=6$}} \\
 &  &  &  &  &  \\
\hline
\multirow{3}{*}{\textbf{2}} & \textbf{36} & 8.197 $\pm$ 1.576 & 8.487 $\pm$ 1.497 & 8.545 $\pm$ 1.571 & 7.802 $\pm$ 1.295 \\
  & \textbf{64} & 9.132 $\pm$ 1.465 & 9.778 $\pm$ 2.835 & 9.901 $\pm$ 2.860 & 9.655 $\pm$ 0.994 \\
  & \textbf{128} & 11.313 $\pm$ 0.037 & 13.609 $\pm$ 0.019 & 15.928 $\pm$ 0.143 & 18.320 $\pm$ 0.053 \\
\hline
\multirow{3}{*}{\textbf{4}} & \textbf{36} & 14.412 $\pm$ 2.561 & 15.252 $\pm$ 2.329 & 14.788 $\pm$ 2.177 & 13.553 $\pm$ 2.556 \\
  & \textbf{64} & 13.234 $\pm$ 2.637 & 15.456 $\pm$ 4.376 & 14.750 $\pm$ 2.764 & 16.762 $\pm$ 2.579 \\
  & \textbf{128} & 28.023 $\pm$ 0.184 & 33.668 $\pm$ 0.247 & 39.656 $\pm$ 0.163 & 45.981 $\pm$ 0.301 \\
\hline
\multirow{3}{*}{\textbf{6}} & \textbf{36} & 18.637 $\pm$ 3.770 & 18.774 $\pm$ 3.310 & 16.998 $\pm$ 1.292 & 16.984 $\pm$ 0.537 \\
  & \textbf{64} & 17.848 $\pm$ 0.956 & 19.603 $\pm$ 4.190 & 19.048 $\pm$ 0.050 & 23.211 $\pm$ 0.104 \\
  & \textbf{128} & 44.329 $\pm$ 0.257 & 54.041 $\pm$ 0.241 & 63.376 $\pm$ 0.232 & 73.487 $\pm$ 0.292 \\
\hline
\end{tabular}
\end{table}

\begin{table}[h!]
\centering
\caption{Average Inference Time (ms) for Point Cloud Size n=256}
\label{tab:avg_inference_time_n256}
\begin{tabular}{|cc|cccc|}
\hline
\multirow{2}{*}{\textbf{Depth}} & \multirow{2}{*}{\textbf{Width}} & \multicolumn{1}{c}{\textbf{$g=3$}} & \multicolumn{1}{c}{\textbf{$g=4$}} & \multicolumn{1}{c}{\textbf{$g=5$}} & \multicolumn{1}{c}{\textbf{$g=6$}} \\
 &  &  &  &  &  \\
\hline
\multirow{3}{*}{\textbf{2}} & \textbf{36} & 2.002 $\pm$ 0.024 & 2.039 $\pm$ 0.019 & 2.002 $\pm$ 0.034 & 2.177 $\pm$ 0.298 \\
  & \textbf{64} & 1.989 $\pm$ 0.033 & 2.249 $\pm$ 0.549 & 2.375 $\pm$ 0.801 & 2.057 $\pm$ 0.128 \\
  & \textbf{128} & 3.821 $\pm$ 0.013 & 4.853 $\pm$ 0.004 & 5.916 $\pm$ 0.003 & 7.040 $\pm$ 0.004 \\
\hline
\multirow{3}{*}{\textbf{4}} & \textbf{36} & 3.449 $\pm$ 0.257 & 3.461 $\pm$ 0.295 & 3.335 $\pm$ 0.090 & 3.442 $\pm$ 0.076 \\
  & \textbf{64} & 3.348 $\pm$ 0.083 & 3.536 $\pm$ 0.482 & 3.487 $\pm$ 0.417 & 3.425 $\pm$ 0.196 \\
  & \textbf{128} & 8.066 $\pm$ 0.096 & 10.448 $\pm$ 0.063 & 13.149 $\pm$ 0.073 & 15.946 $\pm$ 0.113 \\
\hline
\multirow{3}{*}{\textbf{6}} & \textbf{36} & 4.808 $\pm$ 0.225 & 4.640 $\pm$ 0.064 & 4.700 $\pm$ 0.094 & 4.697 $\pm$ 0.117 \\
  & \textbf{64} & 4.630 $\pm$ 0.054 & 4.879 $\pm$ 0.303 & 5.040 $\pm$ 0.223 & 5.574 $\pm$ 0.144 \\
  & \textbf{128} & 12.838 $\pm$ 0.022 & 16.699 $\pm$ 0.012 & 20.867 $\pm$ 0.037 & 25.038 $\pm$ 0.028 \\
\hline
\end{tabular}
\end{table}

\begin{table}
\centering
\caption{Peak Training GPU Memory Usage (MB) for Point Cloud Size n=256}
\label{tab:peak_train_mem_n256}
\begin{tabular}{|cc|cccc|}
\hline
\multirow{2}{*}{\textbf{Depth}} & \multirow{2}{*}{\textbf{Width}} & \multicolumn{1}{c}{\textbf{$g=3$}} & \multicolumn{1}{c}{\textbf{$g=4$}} & \multicolumn{1}{c}{\textbf{$g=5$}} & \multicolumn{1}{c}{\textbf{$g=6$}} \\
 &  &  &  &  &  \\
\hline
\multirow{3}{*}{\textbf{2}} & \textbf{36} & 234.305 & 269.117 & 301.245 & 330.756 \\
  & \textbf{64} & 399.340 & 453.631 & 504.164 & 556.456 \\
  & \textbf{128} & 781.065 & 885.992 & 990.793 & 1096.722 \\
\hline
\multirow{3}{*}{\textbf{4}} & \textbf{36} & 498.823 & 575.209 & 646.656 & 710.748 \\
  & \textbf{64} & 865.529 & 984.267 & 1099.116 & 1215.730 \\
  & \textbf{128} & 1718.178 & 1952.368 & 2186.433 & 2421.625 \\
\hline
\multirow{3}{*}{\textbf{6}} & \textbf{36} & 763.967 & 882.019 & 994.067 & 1091.333 \\
  & \textbf{64} & 1331.842 & 1514.778 & 1694.068 & 1875.005 \\
  & \textbf{128} & 2655.290 & 3018.744 & 3382.073 & 3746.528 \\
\hline
\end{tabular}
\end{table}

\begin{table}
\centering
\caption{Peak Inference GPU Memory Usage (MB) for Point Cloud Size n=256}
\label{tab:peak_inference_mem_n256}
\begin{tabular}{|cc|cccc|}
\hline
\multirow{2}{*}{\textbf{Depth}} & \multirow{2}{*}{\textbf{Width}} & \multicolumn{1}{c}{\textbf{$g=3$}} & \multicolumn{1}{c}{\textbf{$g=4$}} & \multicolumn{1}{c}{\textbf{$g=5$}} & \multicolumn{1}{c}{\textbf{$g=6$}} \\
 &  &  &  &  &  \\
\hline
\multirow{3}{*}{\textbf{2}} & \textbf{36} & 58.441 & 63.805 & 68.857 & 72.668 \\
  & \textbf{64} & 89.884 & 98.951 & 106.262 & 114.455 \\
  & \textbf{128} & 166.093 & 181.824 & 198.429 & 215.912 \\
\hline
\multirow{3}{*}{\textbf{4}} & \textbf{36} & 59.138 & 63.706 & 69.743 & 72.734 \\
  & \textbf{64} & 91.905 & 101.224 & 108.784 & 117.229 \\
  & \textbf{128} & 174.123 & 190.856 & 208.463 & 226.948 \\
\hline
\multirow{3}{*}{\textbf{6}} & \textbf{36} & 59.771 & 64.389 & 70.504 & 73.643 \\
  & \textbf{64} & 93.925 & 103.496 & 111.307 & 120.004 \\
  & \textbf{128} & 182.153 & 199.888 & 218.498 & 237.984 \\
\hline
\end{tabular}
\end{table}

\section{Large Language Model (LLM) Usage}
We used LLMs to assist with writing and polishing portions of this paper, including improving clarity of technical explanations, refining grammar and flow, and enhancing overall readability. All research contributions, experimental design, analysis, and conclusions are entirely our own work. The LLM was used solely as a writing assistance tool to improve quality.

\begin{figure}
    \centering
    \includegraphics[width=0.5\linewidth]{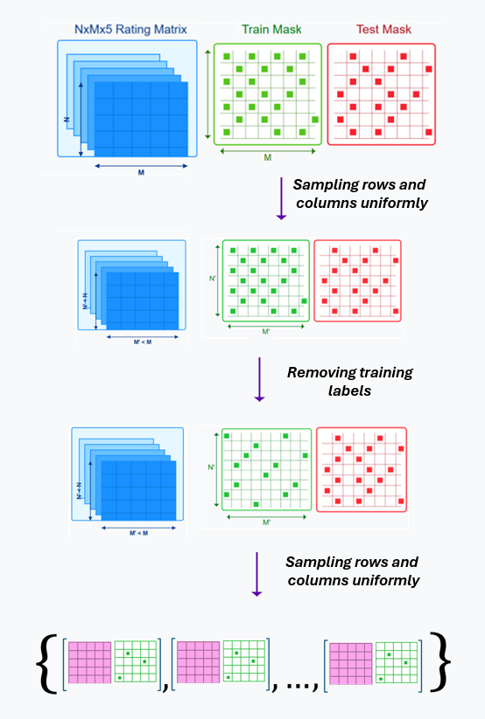}
    \caption{Rating matrix sub-sampling pipeline. We uniformly sample users and items, followed by removing observed entries to simulate extreme data scarcity. Finally, we repeatedly sample rows and columns to generate multiple sub-matrices used as training and test batches.}
    \label{fig: data prepartion}
    
\end{figure}

\end{document}